\documentclass[a4paper]{cas-sc}

\usepackage{amsthm}
\usepackage{amsmath}
\usepackage{amssymb}
\usepackage{placeins}
\usepackage{float}

\newtheoremstyle{casestyle}
{\topsep}{\topsep}{\itshape}{}{\bfseries}{.}{.5em}{}
\theoremstyle{casestyle}
\newtheorem{assumption}{Assumption}[section]
\newtheorem{proposition}{Proposition}[section]
\newtheorem{lemma}{Lemma}[section]
\newtheorem{theorem}{Theorem}[section]
\newtheorem{corollary}{Corollary}[section]
\newtheorem{remark}{Remark}[section]

\DeclareMathOperator*{\vm}{m}
\newcommand{\R}{\mathbb R}
\newcommand{\cA}{\mathcal A}
\newcommand{\cD}{\mathcal D}
\newcommand{\cH}{\mathcal H}
\newcommand{\cK}{\mathcal K}
\newcommand{\cL}{\mathcal L}
\newcommand{\cN}{\mathcal N}
\newcommand{\cE}{\mathcal E}
\newcommand{\cX}{\mathcal X}

\newcommand{\col}{\operatorname{col}}
\usepackage{algorithm}
\usepackage{algorithmic}
\usepackage[numbers,sort&compress]{natbib}
\usepackage{pifont}
\usepackage{cleveref}

\def\tsc#1{\csdef{#1}{\textsc{\lowercase{#1}}\xspace}}
\tsc{WGM}
\tsc{QE}
\tsc{EP}
\tsc{PMS}
\tsc{BEC}
\tsc{DE}

\begin{document}
\let\WriteBookmarks\relax
\def\floatpagepagefraction{1}
\def\textpagefraction{.001}
\shorttitle{Manifold-orthogonal dual-spectrum extrapolation}
\shortauthors{Z.Y. Liang et~al.}

\title [mode = title]{Manifold-orthogonal dual-spectrum extrapolation for parameterized physics-informed neural networks}

\tnotemark[1,2]

\author[1]{Zhangyong Liang}

\author[2]{Huanhuan Gao}[orcid=0000-0003-4463-6433]
\cormark[1]
\ead{gao_huanhuan@jlu.edu.cn}

\affiliation[1]{organization={National Center for Applied Mathematics, Tianjin University},
                addressline={Weijin Road 92},
                postcode={300072},
                city={Tianjin},
                country={China}}

\affiliation[2]{organization={School of Mechanical and Aerospace Engineering, Jilin University},
                addressline={Renmin Street 5988},
                postcode={130025},
                city={Changchun},
                country={China}}

\cortext[cor1]{Corresponding author}

\begin{abstract}
Physics-informed neural networks (PINNs) have made significant strides in modeling dynamical systems governed by partial differential equations (PDEs). 
To avoid computationally prohibitive retraining for new physical conditions, parameterized PINNs (P$^2$INNs) typically employ singular value decomposition (SVD) to adapt pre-trained operators to out-of-distribution (OOD) regimes. 
However, native SVD fine-tuning suffers from rigid subspace locking and the truncation of critical high-frequency spectral modes, ultimately failing to capture complex physical phase transitions. 
While parameter-efficient fine-tuning (PEFT) methods naturally emerge as potential alternatives to resolve these limitations, applying conventional PEFT adapters (e.g., LoRA) to P$^2$INNs triggers a severe Pareto disaster. 
These additive adapters incur immense parameter bloat and recklessly distort established physical manifolds due to the rigid, multiscale nature of physical landscapes. 
To shatter these theoretical deadlocks, we propose Manifold-Orthogonal Dual-spectrum Extrapolation (MODE), a foundational micro-architecture exclusively tailored for physical operator adaptation. 
MODE fundamentally resolves the Pareto dilemma by strategically decoupling physical evolution into three mechanisms. 
The first is principal-spectrum dense mixing, which enables cross-modal energy transfer within frozen orthogonal bases at negligible parameter cost. 
The second is residual-spectrum awakening, a pioneering dual-spectrum approach that dynamically ignites dormant high-frequency shockwaves via a single trainable scalar. 
The third is affine Galilean unlocking, which explicitly isolates kinematic spatial translations. 
Extensive experiments on challenging 1D Convection-Diffusion-Reaction and 2D Helmholtz equations demonstrate that MODE establishes absolute Pareto dominance. 
It strictly compresses spatial complexity to the theoretical minimum of native SVD while decisively surpassing the OOD generalization fidelity of all existing state-of-the-art PEFT baselines.
\end{abstract}

\begin{graphicalabstract}
\includegraphics{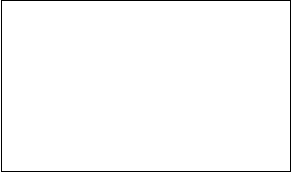}
\end{graphicalabstract}

\begin{highlights}
\item MODE enables parameter-efficient out-of-distribution P$^2$INN adaptation.
\item Principal-spectrum dense mixing unlocks cross-modal transfer in frozen bases.
\item Residual-spectrum awakening revives truncated high-frequency PDE modes.
\item Affine Galilean unlocking decouples translations from weight evolution.
\item MODE resolves the PEFT Pareto dilemma via compact dual-spectrum adaptation.
\end{highlights}

\begin{keywords}
Parameterized partial differential equations \sep parameterized physics-informed neural networks \sep singular value decomposition \sep parameter-efficient fine-tuning \sep out-of-distribution generalization
\end{keywords}

\maketitle

\section{Introduction}
\label{sec:introduction}

Scientific machine learning (SML)~\cite{baker2019workshop} has been growing fast.
Unlike traditional tasks in machine learning, e.g., image classification and natural language processing (NLP), SML requires the exact satisfaction of complex physical characteristics.
Recent work has developed various deep-learning models that encode such physical characteristics, making them physically-consistent (e.g., by enforcing conservation laws~\cite{raissi2019physics, lee2021deep}) and preserving structures~\cite{Greydanus2019hnn, toth2019hamiltonian}.
Among those methods, physics-informed neural networks (PINNs)~\cite{raissi2019physics, karniadakis2021physics} are gaining immense traction because of their sound computational formalism to enforce governing physical laws to learn continuous solutions. 

While powerful, vanilla PINNs suffer from obvious weaknesses: PDE operators are highly nonlinear, making optimization extremely difficult~\cite{krishnapriyan2021characterizing, wang2021understanding}, and repetitive, time-consuming training from scratch is required when evaluating solutions for new PDE parameters (e.g., the convective coefficient $\beta$ or reaction rate $\rho$).
To mitigate these issues, parameterized physics-informed neural networks (P$^2$INNs) have been proposed to explicitly encode PDE parameters into a latent manifold, serving as a foundational model capable of simultaneously learning families of parameterized PDEs.
To adapt pre-trained P$^2$INNs to out-of-distribution (OOD) physical parameters efficiently, Singular Value Decomposition (SVD) modulation~\cite{Gao_2022} is often employed to fine-tune only the core components of the network.

Concurrently, Parameter-Efficient Fine-Tuning (PEFT)~\cite{peft} has achieved unprecedented success in adapting large pre-trained models~\cite{gpt2, T5, llama2, llama3, gemma} to downstream tasks.
Existing PEFT methods have become the \textit{de facto} standard, ranging from additive low-rank adapters like LoRA~\cite{lora}, AdaLoRA~\cite{adalora}, DoRA~\cite{dora}, ReLoRA~\cite{relora}, and VeRA~\cite{vera}, to spectral and orthogonal fine-tuning methods like OFT~\cite{oft}, BOFT~\cite{boft}, and PiSSA~\cite{pissa}, as well as bias-only or sparse updates like BitFit~\cite{bitfit} and AdapterDrop~\cite{adapterdrop}.
Driven by this explosive success, a natural intuition is to directly port these cutting-edge PEFT methods to P$^2$INNs, replacing the native SVD modulation for OOD physical parameter adaptation.

\begin{figure}
\centering
\includegraphics[width=.9\textwidth]{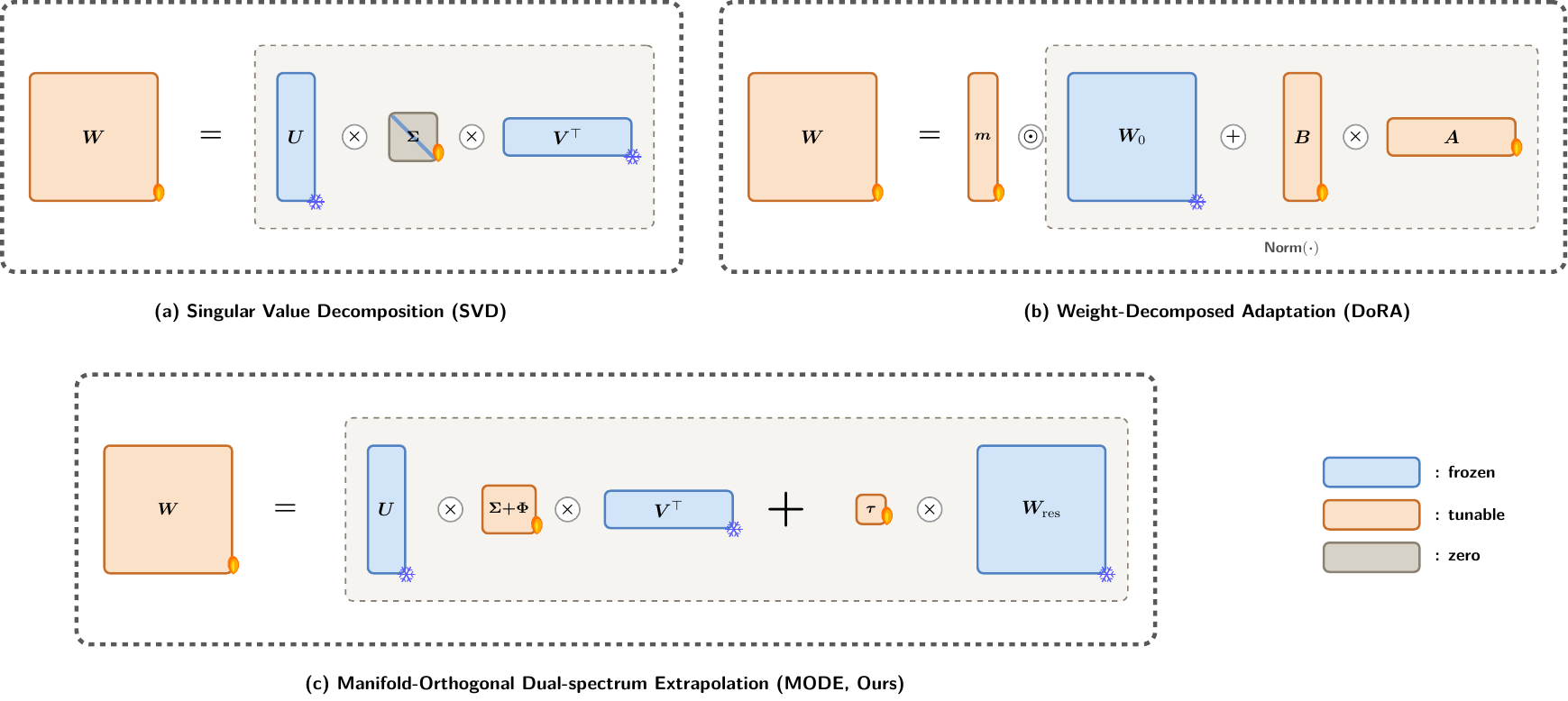} 
\caption{Comparison of different fine-tuning approaches for P$^2$INNs. (a) Native Singular Value Decomposition (SVD), (b) Weight-Decomposed Adaptation (DoRA), and (c) our proposed Manifold-Orthogonal Dual-spectrum Extrapolation (MODE). (\textcolor{blue}{\ding{110}}: frozen parameters; \textcolor{orange}{\ding{110}}: tunable parameters; \textcolor{gray}{\ding{110}}: zeros.)}
\label{fig.compare} 
\end{figure}
\FloatBarrier

However, our empirical investigations reveal a striking incompatibility: directly migrating these existing PEFT methods to P$^2$INNs for physical PDE solving tasks fails to deliver acceptable performance and inevitably leads to optimization collapse.
The fundamental reason lies in a profound domain mismatch.
Conventional PEFT typically operates on discrete tokens within relatively smooth cross-entropy optimization landscapes that exhibit low intrinsic dimensionality~\cite{aghajanyan-etal-2021-intrinsic, li2018measuring}.
In stark contrast, PDE solving requires exactly satisfying high-order spatial-temporal derivatives and stiff physical gradients, resulting in a notoriously non-convex, rigid, and multiscale loss landscape~\cite{krishnapriyan2021characterizing}.
Consequently, replacing SVD with these PEFT triggers a severe Pareto Disaster in scientific computing, exposing five fundamental deadlocks:

\begin{itemize}
    \item \textbf{The Pareto disaster in fine-tuning:} Existing PEFT methods face a severe Pareto dilemma in solving PDEs.
    They either maintain a minimal memory footprint but suffer from unacceptable extrapolation errors (e.g., native SVD~\cite{Gao_2022}), or incur an exorbitant parameter bloat to marginally reduce errors (e.g., DoRA~\cite{dora}, Spectral adapters~\cite{oft}), entirely failing to approach the fidelity of full fine-tuning.
    \item \textbf{Parameter bloat via ambient space search:} Additive adapters (e.g., LoRA~\cite{lora}, AdaLoRA~\cite{adalora}) conduct blind geometric searches in the massive, unconstrained ambient space ($\mathbb{R}^{d_{out} \times d_{in}}$).
    This recklessly discards the orthogonal physical priors crystallized during pre-training, causing severe $\mathcal{O}(d \cdot k)$ parameter redundancy and distorting the underlying physical manifold under rigid PDE constraints.
    \item \textbf{Subspace locking in native SVF:} To strictly compress parameters, native Singular Value Fine-Tuning~\cite{Gao_2022} and principal adaptations~\cite{pissa} update only the diagonal singular values.
    This ``diagonal rigidity'' fundamentally prohibits cross-modal combinations, ``locking'' the subspace and rendering the network incapable of rotating feature lines to fit macroscopic convective phase-shifts.
    \item \textbf{The single-spectrum truncation trap:} Advanced orthogonal adapters (e.g., OFT~\cite{oft}, BOFT~\cite{boft}) universally fail to break the $\sim 10^0$ error barrier because they permanently discard the truncated residual spectrum as noise.
    In physical PDEs, these tiny tail singular values densely encode critical \textit{high-frequency physical modes} (e.g., shocks, steep gradients).
    Discarding them acts as a hard low-pass filter, collapsing the model's high-frequency expressivity during OOD extrapolation.
    \item \textbf{Affine locking and translation inefficiency:} Existing methods universally default to freezing the bias terms~\cite{peft}.
    In physical systems, pure spatial convection (Galilean translation) is equivalent to an affine bias shift~\cite{bitfit}.
    Forcing high-dimensional multiplicative weight matrices to simulate simple additive translations is highly inefficient and unnecessarily consumes the core matrix's non-linear fitting capacity.
\end{itemize}

SciML urgently demands a novel fine-tuning paradigm that simultaneously obliterates ambient redundancy, subspace locking, and high-frequency spectral deficiency, while strictly adhering to first-principle PDE physics.
To shatter these theoretical deadlocks, we abandon the heuristic patching of adapters and propose a foundational micro-architecture exclusively tailored for physical operator extrapolation, as visually compared with existing paradigms in Fig.~\ref{fig.compare}: \textbf{M}anifold-\textbf{O}rthogonal \textbf{D}ual-spectrum \textbf{E}xtrapolation (MODE). 

In summary, the main contributions of this work are as follows:

\begin{itemize}
    \item \textbf{Proposing the MODE paradigm:} Bypassing the heuristic patching of adapters, MODE intrinsically resolves the Pareto disaster by decoupling physical evolution into principal-spectrum phase shifting, residual-spectrum awakening, and affine translation, establishing a new bedrock for operator adaptation.
    \item \textbf{Principal-spectrum dense mixing:} To break subspace locking without parameter bloat, MODE strictly confines geometric evolution within the absolutely frozen pre-trained orthogonal bases ($U_k, V_k^{\mathsf{T}}$).
    By embedding a microscopic, unconstrained dense core matrix ($C_{k \times k}$), it unlocks cross-modal energy transfer with a negligible $\mathcal{O}(k^2)$ parameter cost, enabling full-degree-of-freedom rotational phase shifts while completely rejecting the ambient space search.
    \item \textbf{Residual-spectrum awakening:} Addressing the severe loss of high frequencies, we pioneer a ``Dual-Spectrum'' mechanism.
    By creatively repurposing the frozen residual high-frequency matrix as a ``dormant dictionary,'' MODE introduces merely a \textit{single trainable scalar ($\tau$)} to dynamically ignite hidden shockwaves and broadband modes.
    It is the first PEFT mechanism capable of breaching the $\sim 10^0$ error barrier towards the full fine-tuning limit without expanding high-dimensional matrices.
    \item \textbf{Affine Galilean unlocking:} Addressing affine locking, we explicitly isolate pure spatial Galilean translation from the complex non-linear weight evolution by unlocking a lightweight, independent affine bias drift ($\Delta b$).
    This decouples kinematic phase shifts from dynamical stiff reactions, significantly easing the optimization burden.
    \item \textbf{Establishing absolute Pareto dominance:} We theoretically and empirically demonstrate that MODE effectively compresses the spatial complexity to $\mathcal{O}(k^2 + d_{out} + 1)$, perfectly matching the absolute minimal memory footprint of native SVD.
    Simultaneously, empowered by dual-spectrum extrapolation, its generalization error significantly surpasses the limitations of all existing PEFTs, establishing absolute Pareto dominance in parameter-efficient scientific computing.
\end{itemize}

\section{Related Works}
\label{sec:related_works}

Our work resides at the critical intersection of Scientific Machine Learning (SciML) and Parameter-Efficient Fine-Tuning (PEFT).
In this section, we review the foundational developments in both domains and explicitly diagnose the theoretical gaps that occur when migrating PEFT methods to physics-informed machine learning.

\subsection{Scientific Machine Learning and Parameterized PINNs}
Traditional numerical methods such as finite element methods and finite difference methods have clear pros and cons~\cite{patidar2016nonstandard, li1997adaptive, srirekha2010infinite}.
The more accurate the results, the more expensive the calculation of numerically approximated formulas.
To alleviate these computational bottlenecks, researchers have increasingly turned to machine learning approaches~\cite{karniadakis2021physics, cuomo2022scientific}.
Following early trials using the Galerkin or Deep Ritz methods~\cite{rudd2015constrained, yu2018deep}, Physics-Informed Neural Networks (PINNs)~\cite{raissi2019physics} proposed a transformative way of using deep learning to solve general governing PDEs in a physically sound and easy-to-formulate computational formalism.
Other lines of research analyze operator learning for differential equations~\cite{li2020fourier, gupta2021multiwavelet}, but PINNs maintain their potential by focusing on governing equations which describe physical phenomena.
There have been various strategies to impose physical constraints on neural networks~\cite{LagrangianNN, rudd2015constrained, lee2021machine}.
Most focus on imposing constraints on outputs or injecting specific physical conditions into networks.
As a simple but effective solution, PINNs directly impose physical conditions by utilizing the governing equation itself as a loss term ($\mathcal{L}_f$)~\cite{raissi2019physics}.
Initial and boundary conditions are similarly enforced as data matching losses ($\mathcal{L}_u, \mathcal{L}_b$).
Despite their elegance, PINNs possess recognized weaknesses: they struggle to learn certain classes of PDEs exhibiting high oscillation or sharp transitions~\cite{krishnapriyan2021characterizing}, and gradient-based training often converges to sub-optimal local minima.
Consequently, PINNs have evolved to resolve issues inherent to vanilla architectures.
Architectural enhancements include low-rank extensions and hypernetworks for model efficiency~\cite{cho2024hypernetwork}, and separable designs for efficient training~\cite{cho2024separable}.
Systematic assessments and new sampling strategies have been investigated in PINNACLE~\cite{lau2023pinnacle}.
Furthermore, researchers have combined PINNs with symbolic regression~\cite{podina2023universal} and devised operator preconditioners~\cite{de2023operator}.
Novel optimizers like MultiAdam~\cite{yao2023multiadam} and energy natural gradient descent~\cite{muller2023achieving} have also been proposed.
To avoid repetitive training from scratch, Parameterized PINNs (P$^2$INNs) encode parameters into a latent representation to learn equation families.
Yet, effectively adapting pre-trained P$^2$INNs foundation models to extreme out-of-distribution (OOD) physical parameters via full fine-tuning remains computationally exorbitant, necessitating parameter-efficient adaptation mechanisms.

\subsection{Parameter-Efficient Fine-Tuning (PEFT)}
Recent advancements in large models~\cite{gpt2, T5, llama2, llama3, gemma} have emphasized the development of PEFT techniques to enhance adaptability and efficiency, avoiding the prohibitive cost of full-parameter fine-tuning~\cite{peft, adapter-tuning}.
A notable contribution in this field is Low-Rank Adaptation (LoRA)~\cite{lora}, which freezes the weights of pre-trained models and integrates trainable low-rank matrices into each layer.
We highlight recent approaches that further improve this architecture.
Vector-based Random Matrix Adaptation (VeRA)~\cite{vera} minimizes trainable parameters utilizing shared frozen low-rank random matrices and learning compact scaling vectors.
An alternative approach, Weight-Decomposed Low-Rank Adaptation (DoRA)~\cite{dora}, decomposes pre-trained weight matrices into magnitude and direction components to enhance learning capacity and training stability.
AdaLoRA~\cite{adalora} adaptively distributes the parameter budget across weight matrices based on their importance scores.
PiSSA (Principal Singular Values and Singular Vectors Adaptation)~\cite{pissa} is a variant where the low-rank matrices are initialized with the principal SVD components.
Lastly, FLoRA~\cite{flora} enhances LoRA by enabling each example in a mini-batch to utilize distinct low-rank weights, facilitating efficient batching.
Orthogonal Fine-tuning (OFT)~\cite{oft} modifies pre-trained weight matrices through orthogonal reparameterization to preserve essential information.
Butterfly Orthogonal Fine-tuning (BOFT)~\cite{boft} extends OFT's methodology by incorporating Butterfly factorization, applying multiplicative orthogonal weight updates, which improves parameter efficiency and fine-tuning flexibility.
In the SciML context, early attempts like Singular Value Fine-tuning (e.g., SVD-PINNs~\cite{Gao_2022}) attempted knowledge transfer by strictly freezing singular vectors and updating only the diagonal singular values. 

\section{Methodology}
\label{sec:method}

In this section, we first formulate the problem of solving parameterized PDEs and define the architecture of foundational neural operators (\cref{sec:preliminary}).
Subsequently, we deconstruct the structural failures of the Singular Value Decomposition (SVD) fine-tuning paradigm employed in prior frameworks (e.g., P$^2$INNs).
This analysis strictly motivates the formulation of our Manifold-Orthogonal Dual-spectrum Extrapolation (MODE) algorithm (\cref{sec:motivation}).

\begin{figure}
    \centering
    \includegraphics[width=.9\textwidth]{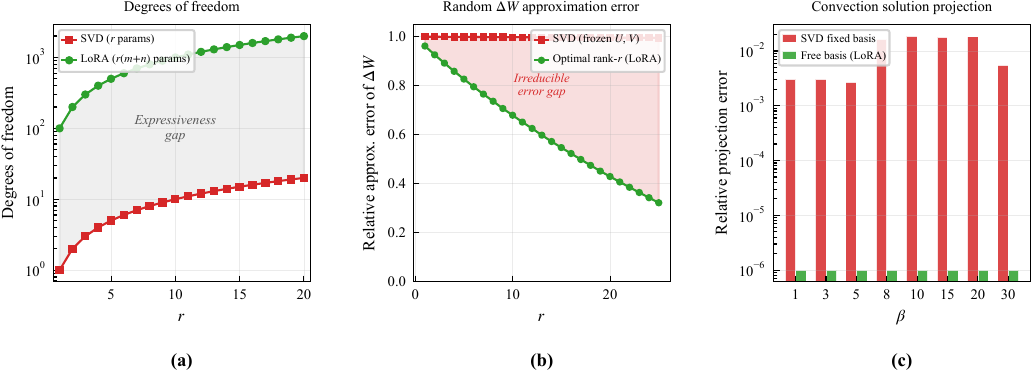}
    \vspace{-2mm}
    \caption{Structural limitations of SVD-based fine-tuning in P$^2$INNs. (a) Degrees of freedom comparison between SVD and LoRA. (b) Approximation error of SVD versus optimal rank-$r$ projection. (c) PDE scenario comparing fixed-basis and free-basis projections.}
\label{fig:svd_basis_limitation}
\end{figure}
\FloatBarrier

\subsection{Preliminary}
\label{sec:preliminary}

In scientific computing, many complex physical systems are governed by parameterized partial differential equations defined on a spatiotemporal domain $(x,t) \in \Omega \times [0, T]$.
As an illustrative example, we consider the parameterized Convection-Diffusion-Reaction (CDR) equation:
\begin{equation}
\frac{\partial u}{\partial t} + \beta\frac{\partial u}{\partial x} - \nu\frac{\partial^2 u}{\partial x^2} - \rho u (1 - u) = 0, \quad x \in \Omega, \; t \in [0,T],
\label{eq:parameterized_pde}
\end{equation}
where the continuous state variable $u(x,t; \pmb{\mu})$ models the physical field, governed by the parameter vector $\pmb{\mu} = [\beta, \nu, \rho]^\top$ representing the convective velocity, diffusion coefficient, and reaction rate, respectively.
In real-time simulation and multi-query scenarios, evaluating solutions across numerous points in the parameter space via repetitive PINN training from scratch is computationally prohibitive.
These scenarios include inverse design and uncertainty quantification.

To address this challenge, Parameterized PINNs (P$^2$INNs)~\cite{cho2024parameterized} approximate the entire solution family as a continuous neural mapping $u_{\Theta}(x,t; \pmb{\mu})$.
Specifically, P$^2$INNs employ a modularized architecture comprising two independent encoders and a manifold decoder.
The encoders map the spatiotemporal coordinates and PDE parameters into latent representations, i.e., $\pmb{h}_{\text{coord}} = g_{\theta_c}(x,t)$ and $\pmb{h}_{\text{param}} = g_{\theta_p}(\pmb{\mu})$.
These latent vectors are then concatenated and processed by the manifold network $g_{\theta_g}$ to predict the physical field:
\begin{equation}
\hat{u}(x,t; \pmb{\mu}) = g_{\theta_g}\bigl([\pmb{h}_{\text{coord}};\; \pmb{h}_{\text{param}}]\bigr).
\label{eq:ppinn_forward}
\end{equation}

Once the foundational P$^2$INN is pre-trained on a source parameter distribution, its weight matrices encode a generalized physical manifold.
To rapidly adapt the pre-trained model to an out-of-distribution (OOD) physical parameter $\pmb{\mu}^*$, Singular Value Decomposition (SVD) modulation is typically employed to fine-tune the intermediate linear layers $\mathbf{W}_0 \in \mathbb{R}^{d_{\text{out}} \times d_{\text{in}}}$ within the manifold network $g_{\theta_g}$.

\begin{figure}
    \centering
    \includegraphics[width=.9\textwidth]{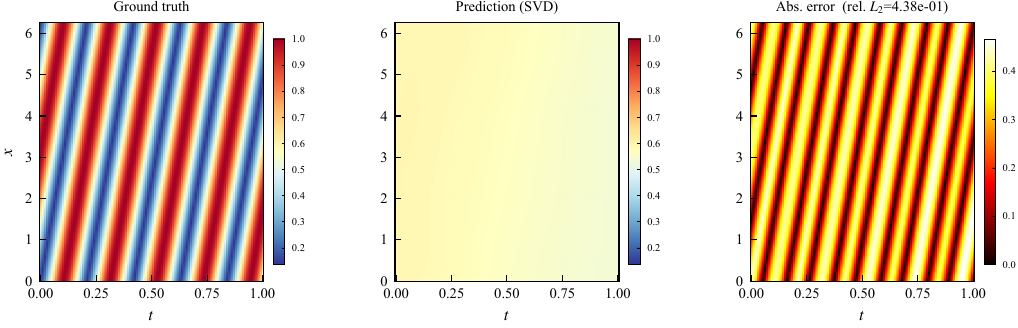}
    \vspace{-2mm}
    \caption{Subspace rotation failure of P$^2$INN-SVD when generalizing to the OOD convection equation $u_t + 30\,u_x = 0$.}
\label{fig:svd_subspace_rotation}
\end{figure}
\FloatBarrier

\subsection{Motivation}
\label{sec:motivation}

To perform rapid adaptation without catastrophic forgetting, the native P$^2$INN-SVD fine-tuning strategy decomposes the pre-trained weight matrix $\mathbf{W}_0$ via Singular Value Decomposition: $\mathbf{W}_0 = \mathbf{U} \mathbf{\Sigma} \mathbf{V}^\top$.
By truncating to a target rank $k \ll \min(d_{\text{in}}, d_{\text{out}})$, the weight is partitioned into a principal spectrum and a residual tail:
\begin{equation}
\mathbf{W}_0 = \underbrace{\mathbf{U}_k \mathbf{\Sigma}_k \mathbf{V}_k^\top}_{\text{Principal Spectrum } \mathbf{W}_{\text{prin}}} + \underbrace{\mathbf{W}_{\text{res}}}_{\text{Residual Spectrum}},
\label{eq:svd_partition}
\end{equation}
where $\mathbf{U}_k \in \mathbb{R}^{d_{\text{out}} \times k}$ and $\mathbf{V}_k \in \mathbb{R}^{d_{\text{in}} \times k}$ represent the foundational orthogonal physical bases.
The P$^2$INN-SVD paradigm strictly freezes these singular vectors and exclusively learns a new diagonal scaling matrix $\mathbf{\Lambda}_k = \text{diag}(\pmb{\alpha}) \in \mathbb{R}^{k \times k}$.
Simultaneously, it implicitly discards the residual matrix $\mathbf{W}_{\text{res}}$ and freezes the original bias $\mathbf{b}_0$.
The adapted layer is thus defined as:
\begin{equation}
\mathbf{W}_{\text{SVD}} = \mathbf{U}_k \mathbf{\Lambda}_k \mathbf{V}_k^\top.
\label{eq:svd_update}
\end{equation}

While this mechanism requires minimal trainable parameters, its structural limitations become evident when examined through the lens of PDE functional mappings.
As shown in \cref{fig:svd_basis_limitation}, the frozen singular vectors and hard spectrum truncation jointly restrict the model's adaptability under OOD parameter generalization.
Specifically, the SVD decomposition in \cref{eq:svd_partition} partitions the pre-trained weight into a principal spectrum $\mathbf{W}_{\text{prin}} = \mathbf{U}_k \mathbf{\Sigma}_k \mathbf{V}_k^\top$ and a discarded residual $\mathbf{W}_{\text{res}}$.
The subsequent fine-tuning in \cref{eq:svd_update} freezes the orthogonal bases $\mathbf{U}_k$, $\mathbf{V}_k$ and only learns a diagonal rescaling $\mathbf{\Lambda}_k$.
This confines the adapted weight to a rigid, axis-aligned subspace of the original manifold.

To rigorously diagnose these limitations, we first characterize the fundamental deficiencies of the P$^2$INN-SVD paradigm from four complementary theoretical perspectives.
These perspectives span function space expressivity, operator structure compatibility, and optimization landscape geometry.
Subsequently, we distill these theoretical insights into three concrete structural deadlocks, which pinpoint exactly where the failure manifests in the network's forward computation.

\textit{(i) Expressivity Collapse from Basis Freezing.}
By freezing $\mathbf{U}_k$ and $\mathbf{V}_k$, the hypothesis space of the adapted layer is strictly confined to a low-rank linear subspace:
\begin{equation}
\mathcal{H}_{\text{SVD}} = \{\mathbf{U}_k \mathbf{\Lambda}_k \mathbf{V}_k^\top \mid \alpha_i \in \mathbb{R}\}.
\label{eq:hypothesis_space}
\end{equation}
As illustrated in \cref{fig:svd_basis_limitation}(a), this freezing mechanism drastically restricts the trainable degrees of freedom.
When complex fluid systems undergo nonlinear topological phase transitions across critical parameter thresholds (e.g., laminar-to-turbulent transition), emergent high-frequency physical modes $\mathbf{e}_{\text{new}}$ arise in the target field.
Since these modes are absent from the source-task low-frequency bases (i.e., $\mathbf{e}_{\text{new}} \notin \text{span}(\mathbf{U}_k)$ and $\mathbf{e}_{\text{new}} \notin \text{span}(\mathbf{V}_k)$), the optimal projection error within this restricted subspace admits a strictly positive lower bound $\epsilon$:
\begin{equation}
\inf_{\mathbf{W} \in \mathcal{H}_{\text{SVD}}} \|u_{\text{true}} - \mathcal{N}_{\mathbf{W}}(\mathbf{x})\| \ge \epsilon > 0.
\label{eq:expressivity_bound}
\end{equation}
\Cref{eq:hypothesis_space,eq:expressivity_bound} establish an \textit{irreducible approximation barrier}: no diagonal rescaling within the frozen subspace can close the gap to the true solution manifold.

\textit{(ii) The ``Pure Linear Modulation'' Paradox in Nonlinear Manifold Mappings.}
The capacity of neural networks to approximate highly complex PDE operators relies on the nonlinear activation mappings within their hierarchical structure, i.e., $\mathbf{h}^{(l+1)} = \sigma(\mathbf{W}^{(l)}\mathbf{h}^{(l)} + \mathbf{b}^{(l)})$.
SVD modulation restricts the parameter update to $\Delta\mathbf{W} = \mathbf{U}_k(\mathbf{\Lambda}_k - \mathbf{\Sigma}_k)\mathbf{V}_k^\top$, which constitutes a purely linear affine diagonal transformation in the parameter space.
However, variations in PDE control parameters (e.g., Reynolds number) induce drastic nonlinear diffeomorphisms $\Phi: \mathcal{M}_{\text{source}} \to \mathcal{M}_{\text{target}}$ in the feature space.
Topologically, a few orthogonal-direction linear diagonal scalings $\mathbf{\Lambda}_k$ are fundamentally incapable of compensating for the nonlinear distortion and folding of the feature manifold.
This leads to unbounded truncation error:
\begin{equation}
\|\Phi(\mathbf{h}) - \sigma((\mathbf{W} + \Delta\mathbf{W})\mathbf{h} + \mathbf{b})\| \to \infty.
\label{eq:nonlinear_paradox}
\end{equation}
As shown in \cref{fig:svd_basis_limitation}(b), the divergence in \cref{eq:nonlinear_paradox} results in a severe approximation error gap between the SVD-based linear modulation and the optimal rank-$r$ projection.

\textit{(iii) Fundamental Applicability Blind Spot in Non-Self-Adjoint Dissipative Systems.}
For hypersonic flows or thermal conduction systems with strong convective dissipation, the spatial evolution operator $\mathcal{L}$ (e.g., the spatial terms in $\frac{\partial u}{\partial t} + \beta\nabla u = \nu\Delta u$) is non-self-adjoint (i.e., $\mathcal{L} \neq \mathcal{L}^*$).
This asymmetry dictates that the true physical eigenmode system $\{\phi_k\}$ must be non-orthogonal (i.e., $\langle \phi_i, \phi_j \rangle \neq 0$ for $i \neq j$).
However, the standard SVD fundamentally enforces absolute orthogonality constraints on the singular vector bases in Euclidean space ($\mathbf{U}_k^\top\mathbf{U}_k = \mathbf{I}$, $\mathbf{V}_k^\top\mathbf{V}_k = \mathbf{I}$).
As depicted in \cref{fig:svd_basis_limitation}(c), fixed-basis projections fail catastrophically in such PDE scenarios, whereas free-basis approaches can flexibly adapt to the non-orthogonal structures required by dissipative systems.
Consequently, imposing symmetric orthogonal SVD bases to represent asymmetric dissipative dynamics rigidly severs the energy transfer pathways sustaining convective directionality.
Furthermore, it causes the variational integral objective to diverge unboundedly under infinite-dimensional mappings.

These theoretical deficiencies in function space, operator structure, and optimization landscape manifest as three concrete \textit{structural deadlocks} in the network's forward computation.
Specifically, expressivity collapse and linear modulation paradox cause rigid diagonal coupling (Deadlock~I).
Orthogonality-dissipation mismatch prevents safe truncation of the residual spectrum (Deadlock~II).
Finally, optimization collapse is exacerbated by forcing multiplicative parameters to absorb additive coordinate shifts (Deadlock~III).

\textbf{Deadlock I: Subspace locking via diagonal rigidity.}
Expanding the forward pass $\mathbf{W}_{\text{SVD}} \mathbf{x}$ exposes a strict geometric limitation:
\begin{equation}
\mathbf{W}_{\text{SVD}} \mathbf{x} = \sum_{i=1}^k \alpha_i \, \mathbf{u}_i \bigl(\mathbf{v}_i^\top \mathbf{x}\bigr),
\label{eq:subspace_locking}
\end{equation}
where $\mathbf{u}_i$ and $\mathbf{v}_i$ denote the $i$-th columns of $\mathbf{U}_k$ and $\mathbf{V}_k$, respectively.
\Cref{eq:subspace_locking} dictates that any input feature projection along $\mathbf{v}_i$ \textit{must strictly} map to its corresponding homologous output basis $\mathbf{u}_i$.
The diagonal structure of $\mathbf{\Lambda}_k$ rigorously prohibits off-diagonal cross-modal energy transfer.
Specifically, it prevents generating associative terms of the form $\mathbf{u}_i (\mathbf{v}_j^\top \mathbf{x})$ for $i \neq j$.
In convection-dominated regimes (e.g., large shifts in $\beta$), tracking physical wave propagation intrinsically requires continuous spatial rotation and phase shifting of feature representations.
By restricting the update to pure scalar amplitudes $\alpha_i$, P$^2$INN-SVD imposes an absolute ``Subspace Lock''.
This paralyzes the network's capacity to rotate the physical feature manifold.
As illustrated in \cref{fig:svd_subspace_rotation}, when fine-tuning the pre-trained model for out-of-distribution (OOD) parameter generalization, SVD-based adaptation is confined to diagonal singular value scaling (amplitude modulation only).
Consequently, it is incapable of performing the cross-modal subspace rotation required to track physical feature manifold changes under OOD convection-dominated PDE mutations.

\textbf{Deadlock II: The single-spectrum truncation trap.}
Existing SVD-based fine-tuning algorithms universally treat the truncated residual $\mathbf{W}_{\text{res}}$ as negligible noise.
However, according to the Kolmogorov $n$-width theory~\cite{pinkus1985n}, in dynamical physical systems these tail singular values densely encode critical \textit{high-frequency physical modes}.
These modes include steep boundary layers, sharp gradients, and shockwave structures.
When adapting to stiff reaction rates (e.g., large $\rho$), the network demands a surge in high-frequency expressivity.
By applying a hard truncation ($\mathbf{W}_{\text{adapted}} \approx \mathbf{W}_{\text{prin}}$), P$^2$INN-SVD inadvertently acts as a rigid low-pass filter.
This irreversibly collapses the model's high-frequency topological capacity.
This explains why the method encounters a strict generalization error barrier well above the full fine-tuning limit.

\textbf{Deadlock III: Affine locking in Galilean translations.}
In the geometric representation of PDEs, a pure spatial advection (Galilean translation) is isomorphic to an affine bias drift in the neural network coordinate system.
P$^2$INN-SVD universally defaults to freezing the pre-trained bias vector $\mathbf{b}_0$.
Forcing the high-dimensional multiplicative weight matrices ($\mathbf{U}$, $\mathbf{V}$) to simulate simple additive coordinate translations induces severe parameter redundancy.
Furthermore, it leads to gradient entanglement and optimization rigidity.

Driven by the imperative to fundamentally resolve these topological deadlocks, we propose the MODE algorithm.
This algorithm explicitly unblocks subspace locking via cross-modal phase shifting.
Additionally, it re-awakens the discarded high-frequency spectrum and decouples affine translations from the multiplicative weight space.

\textbf{Remark: Why existing PEFT methods cannot resolve these deadlocks.}
A natural question arises: can mainstream Parameter-Efficient Fine-Tuning (PEFT) methods be directly ported to remedy the above failures?
We argue that the answer is negative.
Each family of methods inherits its own structural incompatibility with physics-informed machine learning.
Additive low-rank adapters such as LoRA~\cite{lora} inject an unconstrained perturbation $\Delta\mathbf{W} = \mathbf{B}\mathbf{A}$ (where $\mathbf{B} \in \mathbb{R}^{d_{\text{out}} \times r}$, $\mathbf{A} \in \mathbb{R}^{r \times d_{\text{in}}}$, $r \ll \min(d_{\text{in}}, d_{\text{out}})$) into the pre-trained weight:
\begin{equation}
\mathbf{h} = \mathbf{W}_0 \mathbf{x} + \mathbf{B}\mathbf{A}\mathbf{x}.
\label{eq:lora_update}
\end{equation}
While the additive update in \cref{eq:lora_update} lifts the diagonal rigidity of Deadlock~I, the heuristic search in the unconstrained ambient space $\mathbb{R}^{d_{\text{out}} \times d_{\text{in}}}$ recklessly discards the orthogonal physical priors crystallized during pre-training.
This triggers severe gradient entanglement when fitting stiff physical derivatives.
Variants such as VeRA~\cite{vera} ($\mathbf{h} = \mathbf{W}_0 \mathbf{x} + \mathbf{\Lambda}_b\mathbf{B}\mathbf{\Lambda}_d\mathbf{A} \mathbf{x}$) and DoRA~\cite{dora} ($\mathbf{h} = \vm\frac{\mathbf{W}_0+\mathbf{B}\mathbf{A}}{\|\mathbf{W}_0+\mathbf{B}\mathbf{A}\|_c}\mathbf{x}$) inherit the same ambient-space limitation.
Conversely, spectral and orthogonal methods such as OFT~\cite{oft} and BOFT~\cite{boft} apply multiplicative orthogonal updates $\mathbf{h} = (\mathbf{R}\cdot \mathbf{W}_0)\mathbf{x}$.
These updates preserve the pre-trained manifold structure but still universally truncate the residual tail spectrum, leaving Deadlock~II unresolved.
Furthermore, none of these methods explicitly decouple the affine bias component (Deadlock~III).
In summary, no existing PEFT method simultaneously addresses all three deadlocks.
This motivates the design of our MODE algorithm.

\subsection{MODE Formulation}
\label{sec:mode}

To fundamentally shatter the deadlocks of native SVD fine-tuning while strictly avoiding the parameter bloat associated with ambient space searches, we propose the Manifold-Orthogonal Dual-spectrum Extrapolation (MODE) method.
MODE explicitly rejects simple additive perturbations in high-dimensional Euclidean spaces.
Instead, it reconstructs the evolution space by defining a minimalist trainable parameter set:
\begin{equation}
    \Omega_{\text{MODE}} = \big\{ \mathbf{\Phi} \in \mathbb{R}^{k \times k}, \tau \in \mathbb{R}, \Delta \mathbf{b} \in \mathbb{R}^{d_{out}} \big\}.
\label{eq:mode_parameter_set}
\end{equation}
\Cref{eq:mode_parameter_set} is the compact search space from which all subsequent MODE operators are constructed.

Let $\mathbf{W}_0 \in \mathbb{R}^{d_{out} \times d_{in}}$ and $\mathbf{b}_0 \in \mathbb{R}^{d_{out}}$ denote the full-precision weight matrix and bias vector of the $l$-th target hidden layer in a P$^2$INN, obtained after the Phase 1 full pre-training converges.
By applying Singular Value Decomposition (SVD) to $\mathbf{W}_0$ and imposing a truncation rank $k \ll \min(d_{out}, d_{in})$, we orthogonally decompose the pre-trained weight into a Principal Spectrum Manifold ($\mathcal{P}_k$) and a Residual Spectrum ($\mathcal{R}_k$):
\begin{equation*}
    \mathbf{W}_0 = \mathbf{U} \mathbf{\Sigma} \mathbf{V}^	op \equiv \underbrace{\mathbf{U}_k \mathbf{\Sigma}_k \mathbf{V}_k^	op}_{\text{Principal } (\mathcal{P}_k)} + \underbrace{\mathbf{W}_{\text{res}}}_{\text{Residual } (\mathcal{R}_k)},
\end{equation*}
where $\mathbf{U}_k \in \mathbb{R}^{d_{out} \times k}$ and $\mathbf{V}_k \in \mathbb{R}^{d_{in} \times k}$ constitute the absolute orthogonal bases of the truncated principal subspace.
$\mathbf{\Sigma}_k \in \mathbb{R}^{k \times k}$ is the diagonal principal singular value matrix.
The truncated high-frequency tail residual matrix is defined as $\mathbf{W}_{\text{res}} = \mathbf{W}_0 - \mathbf{U}_k \mathbf{\Sigma}_k \mathbf{V}_k^	op$.

In native Singular Value Fine-tuning (i.e., P$^2$INN-SVD), to extremely compress parameters, the update equation is severely restricted to a diagonal scaling vector $\Delta \boldsymbol{\alpha} \in \mathbb{R}^k$.
This structure strictly prohibits cross-modal mappings $\mathbf{u}_i \mathbf{u}_j^	op \ (i \neq j)$:
\begin{align}
    \widetilde{\mathbf{W}}_{\text{SVD}} &= \mathbf{U}_k \big( \mathbf{\Sigma}_k + \text{diag}(\Delta \boldsymbol{\alpha}) \big) \mathbf{V}_k^	op, \label{eq:p2inn_svd} \\
    \widetilde{\mathbf{W}}_{\text{SVD}} \mathbf{x} &= \sum_{i=1}^k (\sigma_i + \Delta \alpha_i) \mathbf{u}_i (\mathbf{v}_i^	op \mathbf{x}). \notag
\end{align}

\begin{remark}[Pathology of Native SVF]
\label{rmk:svf}
Native Singular Value Fine-Tuning (e.g., P$^2$INN-SVD) forcefully imposes a diagonal constraint $\Omega_{\text{SVD}} = \{ \Delta \boldsymbol{\alpha} \in \mathbb{R}^k \}$. 
Algebraically, this strictly dictates the cross-term coefficients to zero, i.e., $\text{Coeff}(\mathbf{u}_i \mathbf{v}_j^	op) \equiv 0 \ (\forall i \neq j)$, thoroughly depriving the feature manifold of the Lie algebraic rotational degrees of freedom required to fit macroscopic convective phase-shifts.
Concurrently, it forcefully assumes $\mathbf{W}_{\text{res}} \to \mathbf{0}$, causing the topological capacity to approximate broadband high-frequency shockwave energy to permanently collapse.
\end{remark}

\begin{remark}[Pathology of Additive Low-Rank \& Spectral Variants]
\label{rmk:lora}
Projecting the update matrix of additive low-rank fine-tuning (e.g., LoRA) into the pre-trained singular basis coordinate system yields $\mathbf{\Sigma}_k + \mathbf{U}_k^	op (\mathbf{B} \mathbf{A}) \mathbf{V}_k$.
Although the dense off-diagonal terms implicitly unlock modal mixing, this paradigm completely abandons the orthogonal physical priors.
Its spatial complexity $\mathcal{O}(r (d_{in} + d_{out}))$ severely violates the minimalist parameter lower bound, plunging optimization into an ambient space blind search prone to condition number collapse.
Furthermore, modern spectral variants (e.g., SVFT) unlock cross-mappings within the $k$-dimensional subspace but remain trapped by the single-spectrum truncation assumption (discarding the dictionary $\mathbf{W}_{\text{res}}$), leaving their generalization pathways to physical stiff mutations completely severed.
\end{remark}

From the perspective of PDE functional mappings, Eq.~\eqref{eq:p2inn_svd} algebraically exposes three fatal deadlocks, forming the direct motivation for our reconstruction:
\begin{itemize}
    \item \textbf{Subspace locking:} Constraining the update to a diagonal matrix $\text{diag}(\Delta \boldsymbol{\alpha})$ dictates that the projection of the input vector on $\mathbf{v}_i$ \textit{must} map exclusively to its homologous $\mathbf{u}_i$.
    Cross-modal mapping terms $\mathbf{u}_i \mathbf{v}_j^	op$ ($i \neq j$) are strictly prohibited.
    This topologically paralyzes the rotational and shear degrees of freedom of the manifold, rendering macroscopic phase evolution (e.g., convective translations) entirely impossible.
    \item \textbf{Single-spectrum truncation trap:} Eq.~\eqref{eq:p2inn_svd} forcefully imposes $\mathbf{W}_{\text{res}} \to \mathbf{0}$.
    In physical systems, Kolmogorov $n$-width theory dictates that these tiny tail singular values densely encode critical high-frequency modes such as shockwaves and steep boundaries.
    Truncating the residual is equivalent to applying a hard low-pass filter.
    This permanently deprives the model of the high-frequency topological capacity required to approach the Phase 1 theoretical limit.
    \item \textbf{Affine locking:} Defaulting to a frozen bias $\mathbf{b}_0$ forces the high-dimensional multiplicative weight matrices to simulate Galilean translations in pure spatial dimensions.
    Since such translations are inherently additive affine shifts, encoding them through multiplicative weights leads to extreme parameter inefficiency.
\end{itemize}

To obliterate these deadlocks, MODE decouples the evolution process into three independent components:

\textbf{(i) Principal-spectrum dense mixing:} By introducing an unconstrained microscopic dense matrix $\mathbf{\Phi}$, its off-diagonal elements $\phi_{ij}$ directly bridge the cross-modal energy mappings between orthogonal physical modes, unlocking full-degree-of-freedom Lie algebraic rotations:
\begin{align}
    \widetilde{\mathbf{W}}_{\text{prin}} &= \mathbf{U}_k (\mathbf{\Sigma}_k + \mathbf{\Phi}_k) \mathbf{V}_k^	op, \\
    \widetilde{\mathbf{W}}_{\text{prin}} \mathbf{x} &= \sum_{i=1}^k \sum_{j=1}^k (\sigma_i \delta_{ij} + \phi_{ij}) \mathbf{u}_i (\mathbf{v}_j^	op \mathbf{x}),
\label{eq:mode_principal_mixing}
\end{align}
where $\delta_{ij}$ denotes the Kronecker delta.

\textbf{(ii) Residual-spectrum awakening:} By introducing a unique, globally trainable scalar $\tau$, we dynamically activate the frozen residual dictionary in an isotropic, proportional, and full-volume manner:
\begin{equation}
    \widetilde{\mathbf{W}}_{\text{res}} = \tau \cdot \mathbf{W}_{\text{res}}.
\label{eq:mode_residual_awakening}
\end{equation}

\textbf{(iii) Affine Galilean unlocking:} By introducing an independent drift vector $\Delta \mathbf{b}$, pure spatial translations are explicitly decoupled from the weight matrix:
\begin{equation}
    \widetilde{\mathbf{b}} = \mathbf{b}_0 + \Delta \mathbf{b}.
\label{eq:mode_affine_unlocking}
\end{equation}

Strictly fusing the three operators in \cref{eq:mode_principal_mixing,eq:mode_residual_awakening,eq:mode_affine_unlocking} yields the standard forward evolution polynomial of MODE:
\begin{equation}
    \mathbf{y}_{\text{MODE}} = \Big[ \mathbf{U}_k (\mathbf{\Sigma}_k + \mathbf{\Phi}) \mathbf{V}_k^	op + \tau \mathbf{W}_{\text{res}} \Big] \mathbf{x} + \Big( \mathbf{b}_0 + \Delta \mathbf{b} \Big).
\label{eq:mode_standard_forward}
\end{equation}
The standard form in \cref{eq:mode_standard_forward} exposes the three MODE pathways before algebraic memory reduction.

To eliminate the prohibitive $\mathcal{O}(d_{out} \times d_{in})$ physical memory disaster caused by explicitly instantiating the massive high-frequency dictionary $\mathbf{W}_{\text{res}}$, we substitute the analytical identity to execute a residual-free algebraic reconstruction:
\begin{align}
    \widetilde{\mathbf{W}}_{\text{MODE}} &= \mathbf{U}_k (\mathbf{\Sigma}_k + \mathbf{\Phi}_k) \mathbf{V}_k^	op + \tau \big( \mathbf{W}_0 - \mathbf{U}_k \mathbf{\Sigma}_k \mathbf{V}_k^	op \big) \notag \\
    &= \tau \mathbf{W}_0 + \mathbf{U}_k \mathbf{\Sigma}_k \mathbf{V}_k^	op - \tau \mathbf{U}_k \mathbf{\Sigma}_k \mathbf{V}_k^	op + \mathbf{U}_k \mathbf{\Phi}_k \mathbf{V}_k^	op \notag \\
    &= \tau \mathbf{W}_0 + \mathbf{U}_k \Big( \mathbf{\Phi}_k + (1 - \tau)\mathbf{\Sigma}_k \Big) \mathbf{V}_k^	op.
\label{eq:mode_weight_reconstruction}
\end{align}

Extracting the orthogonal bases in \cref{eq:mode_weight_reconstruction} as common factors derives the ultimate highly efficient computational graph that fully reuses the native foundational weights and thoroughly eradicates the memory footprint of $\mathbf{W}_{\text{res}}$:
\begin{equation}
    \mathbf{y}_{\text{MODE}} = \tau (\mathbf{W}_0 \mathbf{x}) + \mathbf{U}_k \Big[ \mathbf{\Phi}_k + (1 - \tau)\mathbf{\Sigma}_k \Big] (\mathbf{V}_k^	op \mathbf{x}) + \big( \mathbf{b}_0 + \Delta \mathbf{b} \big).
    \label{eq:mode_efficient_forward}
\end{equation}

To circumvent early-stage destruction of the physical manifold, we provide the following theoretical guarantees:

\begin{theorem}[Zero-Degradation Exact Recovery]
\label{thm:recovery}
To circumvent early-stage physical manifold destruction caused by random noise initialization inherent in additive PEFT paradigms, MODE is constrained by strict mathematical identity boundary conditions:
\begin{equation}
    \mathbf{\Phi}_k^{(t=0)} = \mathbf{0}_{k \times k}, \quad \tau^{(t=0)} = 1.0, \quad \Delta \mathbf{b}^{(t=0)} = \mathbf{0}_{d_{out}}.
\label{eq:mode_initialization}
\end{equation}
\end{theorem}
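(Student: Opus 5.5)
The plan is to check directly that, under the initialization \eqref{eq:mode_initialization}, the MODE layer reproduces the pre-trained affine map $\mathbf{x}\mapsto \mathbf{W}_0\mathbf{x}+\mathbf{b}_0$ exactly, for every input $\mathbf{x}$. From this, layer-by-layer induction gives that the whole adapted P$^2$INN coincides with the Phase~1 model at $t=0$, so there is no initial degradation.

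\textbf{Step 1: substitute into the weight.} Start from the residual-free form \eqref{eq:mode_weight_reconstruction}, which is an algebraic identity valid for all $(\mathbf{\Phi}_k,\tau)$. Setting $\mathbf{\Phi}_k=\mathbf{0}_{k\times k}$ and $\tau=1$ gives $1\cdot\mathbf{W}_0+\mathbf{U}_k(\mathbf{0}+0\cdot\mathbf{\Sigma}_k)\mathbf{V}_k^\top=\mathbf{W}_0$. As a consistency check, the same result follows from the standard form \eqref{eq:mode_standard_forward}: there one gets $\mathbf{U}_k\mathbf{\Sigma}_k\mathbf{V}_k^\top+\mathbf{W}_{\text{res}}$, which equals $\mathbf{W}_0$ by the SVD partition \eqref{eq:svd_partition}.

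\textbf{Step 2: substitute into the bias.} With $\Delta\mathbf{b}=\mathbf{0}$, \eqref{eq:mode_affine_unlocking} gives $\widetilde{\mathbf{b}}=\mathbf{b}_0$. Together with Step~1, \eqref{eq:mode_efficient_forward} then reduces to $\mathbf{y}_{\text{MODE}}=\mathbf{W}_0\mathbf{x}+\mathbf{b}_0$ pointwise in $\mathbf{x}$.

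\textbf{Step 3: propagate through the network.} Each adapted layer computes $\sigma(\widetilde{\mathbf{W}}^{(l)}\mathbf{h}^{(l)}+\widetilde{\mathbf{b}}^{(l)})$, and the untouched encoders $g_{\theta_c},g_{\theta_p}$ are unchanged. Induction on $l$ therefore shows every hidden activation equals its pre-trained value. Hence $\hat u(x,t;\pmb{\mu})$, all its derivatives in $(x,t)$, and consequently the physics loss $\mathcal{L}_f$ and the data losses $\mathcal{L}_u,\mathcal{L}_b$ at $t=0$ coincide with those of the converged Phase~1 model.

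\textbf{Main obstacle.} The only real subtlety is interpretive rather than computational: the theorem states a constraint, and what must actually be proved is the implication "initialization $\Rightarrow$ exact functional identity". This has to be phrased so that it holds for arbitrary truncation rank $k$. It does, because \eqref{eq:mode_weight_reconstruction} holds regardless of how $\mathbf{W}_{\text{res}}$ is distributed across the tail spectrum, and no numerical truncation of $\mathbf{W}_{\text{res}}$ is needed, since the efficient form never instantiates it. The claim is exact equality in exact arithmetic; any floating-point remark is beyond the theorem's scope.

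Optionally, one can contrast this with LoRA-type random initialization. There $\mathbf{B}\mathbf{A}\neq\mathbf{0}$ unless one factor is zeroed, which yields only a perturbative rather than a structural identity.
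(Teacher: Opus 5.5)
Your proposal is correct and follows essentially the paper's route: substitute $\mathbf{\Phi}_k=\mathbf{0}$, $\tau=1$, $\Delta\mathbf{b}=\mathbf{0}$ into the residual-free forward form \eqref{eq:mode_efficient_forward}, so that the $\mathbf{\Phi}_k+(1-\tau)\mathbf{\Sigma}_k$ core vanishes and each adapted layer reduces exactly to $\mathbf{W}_0\mathbf{x}+\mathbf{b}_0$. Your layer-by-layer induction to whole-network and loss equality goes beyond the in-text proof, but it is exactly what the paper does later in Corollary~\ref{cor:zero_degradation} of the appendix; one small fix is that the input to layer $l$ should be $\mathbf{h}^{(l-1)}$, not $\mathbf{h}^{(l)}$.
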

\begin{proof}
Substituting the identity conditions in \cref{eq:mode_initialization} into the efficient forward computational graph in \cref{eq:mode_efficient_forward}, at the initial optimization step $t=0$:
\begin{align}
    \mathbf{y}^{(0)}_{\text{MODE}} &= 1.0 \cdot (\mathbf{W}_0 \mathbf{x}) \notag \\
    &\quad + \mathbf{U}_k \Big[ \mathbf{0}_{k \times k} + (1 - 1.0)\mathbf{\Sigma}_k \Big] (\mathbf{V}_k^	op \mathbf{x}) \notag \\
    &\quad + \big( \mathbf{b}_0 + \mathbf{0}_{d_{out}} \big) \notag \\
    &\equiv \mathbf{W}_0 \mathbf{x} + \mathbf{b}_0.
\label{eq:mode_initial_forward}
\end{align}
The infinitesimal terms strictly zero out.
\Cref{eq:mode_initial_forward} mathematically guarantees a $100\%$ lossless and seamless takeover of the pre-trained foundational physical operator at the fine-tuning starting point.
\end{proof}

\begin{theorem}[Asymptotic Pareto Spatial Complexity]
\label{thm:complexity}
The exact spatial complexity of the newly introduced parameters for a single-layer forward pass in MODE is determined by the algebraic sum of its micro-components:
\begin{equation}
    \begin{aligned}
    |\Omega_{\text{MODE}}| &= \dim(\mathbf{\Phi}) + \dim(\tau) + \dim(\Delta \mathbf{b}) \\
    &= k^2 + 1 + d_{out}.
    \end{aligned}
\label{eq:mode_complexity}
\end{equation}
\end{theorem}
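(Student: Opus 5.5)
The plan is to count the independent scalar degrees of freedom in the trainable set $\Omega_{\text{MODE}}$ of \cref{eq:mode_parameter_set}, and to check that nothing else introduced by the reparameterization is trainable. First, I will fix what counts as ``newly introduced'': the pre-trained quantities $\mathbf{W}_0$, $\mathbf{b}_0$, and the SVD factors $\mathbf{U}_k$, $\mathbf{\Sigma}_k$, $\mathbf{V}_k$ are frozen buffers, and by \cref{eq:mode_efficient_forward} the residual $\mathbf{W}_{\text{res}}$ is never instantiated. These therefore contribute no trainable parameters. The only trainable objects appearing in \cref{eq:mode_efficient_forward} are $\mathbf{\Phi}_k$, $\tau$ and $\Delta\mathbf{b}$, which are exactly the elements of $\Omega_{\text{MODE}}$.

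Next, I will count each component separately. The matrix $\mathbf{\Phi}_k\in\mathbb{R}^{k\times k}$ is unconstrained: no symmetry, skew-symmetry, orthogonality or diagonal structure is imposed in \cref{eq:mode_principal_mixing}. Hence all $k^2$ entries $\phi_{ij}$ are free, and $\dim(\mathbf{\Phi})=k^2$. The awakening coefficient $\tau\in\mathbb{R}$ of \cref{eq:mode_residual_awakening} is a single global scalar, so it contributes $1$. The drift $\Delta\mathbf{b}\in\mathbb{R}^{d_{out}}$ of \cref{eq:mode_affine_unlocking} contributes $d_{out}$.

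Finally, I will argue that the total is exactly the sum of these three counts. The three parameter blocks enter through disjoint coordinates, so the map from $(\mathbf{\Phi},\tau,\Delta\mathbf{b})$ to the parameter vector is a direct product, and dimensions add to give \cref{eq:mode_complexity}. The asymptotic statement then follows because $k\ll\min(d_{in},d_{out})$, giving $\mathcal{O}(k^2+d_{out})$. This is to be contrasted with $\mathcal{O}(r(d_{in}+d_{out}))$ for LoRA in Remark~\ref{rmk:lora}.

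The only delicate point is one of interpretation rather than computation. ``Spatial complexity of newly introduced parameters'' has to be read as trainable parameter count, not activation memory or frozen storage. The frozen factors $\mathbf{U}_k,\mathbf{V}_k,\mathbf{\Sigma}_k$ do occupy $\mathcal{O}(k(d_{in}+d_{out}))$ memory, so I will state this convention explicitly: like native SVD, those are precomputed buffers and are not counted.

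One might also ask whether the parameterization is redundant, i.e.\ whether the effective dimension of the image set is smaller than $k^2+1+d_{out}$. By \cref{eq:mode_weight_reconstruction}, $\tau$ and the diagonal of $\mathbf{\Phi}$ can interact only if $\mathbf{W}_{\text{res}}$ lies in $\operatorname{span}\{\mathbf{u}_i\mathbf{v}_j^\top\}$. It does not when $\mathbf{W}_{\text{res}}\neq\mathbf{0}$, since the residual is orthogonal to the principal block. So the count is also the true dimension generically, but this refinement is not needed for the statement as given.
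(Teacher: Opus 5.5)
Your proposal is correct, and it proves the displayed equality more directly than the paper does.

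\textbf{How the paper argues.} The paper takes $|\Omega_{\text{MODE}}|=k^2+1+d_{out}$ essentially as given (``Based on'' the equation itself). Its effort goes into the asymptotic corollary. It appeals to Kolmogorov $n$-width theory to argue that the truncation rank $k$ must be tiny, treats $k^2$ as a constant, and concludes $\mathcal{O}(k^2+d_{out})\sim\mathcal{O}(d_{out})$, which it presents as matching native SVD.

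\textbf{How your argument differs.} You justify the count itself. You state the convention that only trainable quantities count: $\mathbf{W}_0,\mathbf{b}_0,\mathbf{U}_k,\mathbf{\Sigma}_k,\mathbf{V}_k$ are frozen buffers, and $\mathbf{W}_{\text{res}}$ is never instantiated. You observe that $\mathbf{\Phi}$ carries no structural constraint, so all $k^2$ entries are free. You then add the dimensions of three disjoint blocks. The paper leaves all of this implicit.

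\textbf{Your non-redundancy remark.} This is a genuine addition the paper lacks. Since $\mathbf{W}_{\text{res}}$ is Frobenius-orthogonal to $\operatorname{span}\{\mathbf{u}_i\mathbf{v}_j^\top\}_{i,j\le k}$, and $\mathbf{\Phi}\mapsto\mathbf{U}_k\mathbf{\Phi}\mathbf{V}_k^\top$ is injective, the affine map $(\mathbf{\Phi},\tau,\Delta\mathbf{b})\mapsto(\widetilde{\mathbf{W}},\widetilde{\mathbf{b}})$ is injective exactly when $\mathbf{W}_{\text{res}}\neq\mathbf{0}$. The count therefore equals the true number of degrees of freedom. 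Your phrasing about $\tau$ ``interacting with the diagonal of $\mathbf{\Phi}$'' is slightly off, though. When $\mathbf{W}_{\text{res}}=\mathbf{0}$, $\tau$ has no effect on the layer at all.

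\textbf{One correction on the asymptotics.} The bound $\mathcal{O}(k^2+d_{out})$ follows immediately from the exact count and needs no hypothesis on $k$. The condition $k\ll\min(d_{in},d_{out})$ matters only for the paper's further step to $\mathcal{O}(d_{out})$. Even there it is not sufficient: that step needs $k^2=\mathcal{O}(d_{out})$, i.e.\ $k=\mathcal{O}(\sqrt{d_{out}})$. The paper assumes this implicitly when it treats $k^2$ as a constant.

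\textbf{What each approach buys.} Your version rigorously establishes exactly what the statement asserts. The paper's version adds the ``Pareto'' interpretation, but it rests on a heuristic appeal about the size of $k$ rather than a mathematical step.
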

\begin{proof}
Based on \cref{eq:mode_complexity} and the Kolmogorov $n$-width theory, the continuous physical manifold governed by PDEs inevitably dictates an extremely narrow truncation rank ($k \ll \min(d_{out}, d_{in})$).
Consequently, $k^2$ asymptotically tends to a minimal constant, and its asymptotic complexity limit perfectly collapses to the theoretical absolute lower bound equivalent to native minimalist SVD:
\begin{equation*}
    \mathcal{O}\big( |\Omega_{\text{MODE}}| \big) = \mathcal{O}(k^2 + d_{out}) \sim \mathcal{O}(d_{out}).
\end{equation*}

Thus, MODE theoretically achieves a Pareto-optimal balance between parameter efficiency and topological generalization capacity.
\end{proof}

\subsection{MODE Modulation}
\label{sec:Modulation}

Our framework operates in two distinct phases: (1) foundational pre-training on a source parameter distribution, and (2) parameter-efficient out-of-distribution (OOD) fine-tuning. We first briefly outline the foundational architecture and subsequently detail the proposed MODE fine-tuning architecture.

\paragraph{P$^2$INN Architecture}
To efficiently solve families of parameterized PDEs, we adopt the modularized Parameterized Physics-Informed Neural Network (P$^2$INN) as our foundational base model. It approximates the continuous physical field $u_{\Theta}(x,t; \pmb{\mu})$ via three sub-networks: a spatiotemporal coordinate encoder $g_{\theta_c}$, a PDE parameter encoder $g_{\theta_p}$, and a manifold decoder network $g_{\theta_g}$.

Specifically, the encoders extract latent representations $\pmb{h}_{\text{coord}}$ and $\pmb{h}_{\text{param}}$ from their respective inputs using stacked fully-connected (FC) layers equipped with non-linear activations:
\begin{equation}
    \pmb{h}_{\text{coord}} = g_{\theta_c}(x,t), \quad \pmb{h}_{\text{param}} = g_{\theta_p}(\pmb{\mu}).
\label{eq:latent_embeddings}
\end{equation}
These decoupled representations are then concatenated ($\oplus$) and processed by the manifold network to infer the continuous PDE solution:
\begin{equation}
    \hat{u}(x,t; \pmb{\mu}) = g_{\theta_g}(\pmb{h}_{\text{coord}} \oplus \pmb{h}_{\text{param}}).
\label{eq:model_forward}
\end{equation}
By explicitly encoding the PDE parameters into a high-dimensional latent space rather than treating them as mere input coordinates, the foundational model intrinsically constructs a generalized physical manifold that captures the shared dynamics across various equations. 
The latent encodings in \cref{eq:latent_embeddings} and the manifold prediction in \cref{eq:model_forward} define the base P$^2$INN mapping before MODE adaptation.
During the pre-training phase, the global parameter set $\Theta = \{ \theta_c, \theta_p, \theta_g \}$ is jointly optimized over a broad source distribution of PDEs. Once converged, the weights of $g_{\theta_g}$ crystallize the fundamental geometric priors of the PDE family and are \textit{strictly frozen}.

\begin{figure}
\centering
\includegraphics[width=.9\textwidth]{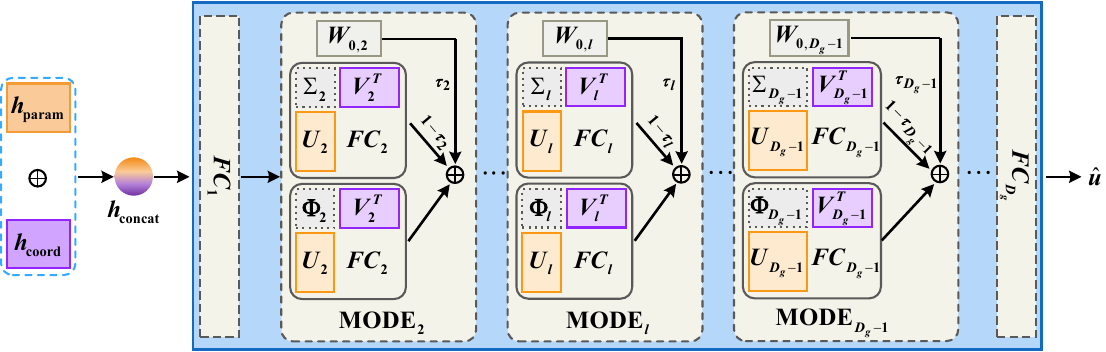}
\caption{\textbf{P$^2$INNs with MODE modulation.} Each intermediate layer of the manifold decoder is replaced by a dual-path MODE-adapted block: (i) the \textit{Frozen Full-Rank Path} scales the frozen weight output $\mathbf{W}_{0,l}\mathbf{h}^{(l-1)}$ by the trainable scalar $\tau_l$; (ii) the \textit{Ultra-Low-Rank Orthogonal Path} mixes cross-modal energy via the dense core $\mathbf{\Phi}_l$ within the compact $k$-dimensional subspace. Only $\{\mathbf{\Phi}_l, \tau_l, \Delta\mathbf{b}_l\}$ are trainable; all other components are strictly frozen.}
\label{fig:mode_arch}
\end{figure}
\FloatBarrier

\paragraph{MODE Modulation}
While the pre-trained foundational model encapsulates generalized physical priors, adapting it to extreme OOD parameters (e.g., highly stiff reactions or turbulent convections) demands profound topological transformations within the manifold network $g_{\theta_g}$. Native approaches attempt to solve this by replacing the intermediate fully-connected layers of $g_{\theta_g}$ with a rigid SVD modulation ($\widetilde{W}_{l} = \mathbf{U}_l \text{diag}(\boldsymbol{\alpha}_l) \mathbf{V}_l^\top$). However, as axiomatically diagnosed in Section~\ref{sec:mode}, this triggers severe subspace locking and high-frequency truncation, rendering the model mathematically incapable of generalizing to abrupt physical mutations.

To circumvent these structural defects, we seamlessly inject the proposed MODE modulation into the intermediate layers of the manifold network $g_{\theta_g}$ (typically $l = 2, 3, \dots, D_g-1$). 
For the $l$-th frozen foundational linear layer defined by pre-trained weight $\mathbf{W}_{0}^{(l)} \in \mathbb{R}^{d_{out} \times d_{in}}$ and bias $\mathbf{b}_{0}^{(l)} \in \mathbb{R}^{d_{out}}$, its orthogonal spectral decomposition under a strict truncation rank $k \ll \min(d_{out}, d_{in})$ is rigorously defined as $\mathbf{W}_{0}^{(l)} = \mathbf{U}_k^{(l)} \mathbf{\Sigma}_k^{(l)} {\mathbf{V}_k^{(l)}}^\top + \mathbf{W}_{\text{res}}^{(l)}$.

Instead of utilizing a rigid diagonal matrix, the MODE modulation introduces a minimalist trainable parameter set $\Omega_{\text{MODE}}^{(l)} = \big\{ \mathbf{\Phi}_k^{(l)} \in \mathbb{R}^{k \times k}, \tau^{(l)} \in \mathbb{R}, \Delta \mathbf{b}^{(l)} \in \mathbb{R}^{d_{out}} \big\}$, structurally remodeling the forward pass of the $l$-th neural layer. Given the input activation $\mathbf{h}^{(l-1)}$ from the previous layer, the theoretical forward architecture of a MODE-adapted layer is formulated as:
\begin{equation}
    \begin{aligned}
    \mathbf{h}^{(l)} = \sigma \Bigg( 
        &\Big[ \mathbf{U}_k^{(l)} (\mathbf{\Sigma}_k^{(l)} + \mathbf{\Phi}_k^{(l)}) {\mathbf{V}_k^{(l)}}^\top + \tau^{(l)} \mathbf{W}_{\text{res}}^{(l)} \Big] \mathbf{h}^{(l-1)} \\
        &+ \big(\mathbf{b}_0^{(l)} + \Delta \mathbf{b}^{(l)}\big) 
    \Bigg),
    \end{aligned}
\label{eq:mode_layer_standard}
\end{equation}
where $\sigma(\cdot)$ is the non-linear activation function (e.g., Tanh or SiLU).

Explicitly instantiating the massive high-frequency dictionary $\mathbf{W}_{\text{res}}^{(l)}$ in the computational graph incurs a prohibitive dense $\mathcal{O}(d_{out} \times d_{in})$ memory overhead, which severely contradicts the parameter-efficient philosophy. 
By substituting the analytical identity $\mathbf{W}_{\text{res}}^{(l)} \equiv \mathbf{W}_0^{(l)} - \mathbf{U}_k^{(l)} \mathbf{\Sigma}_k^{(l)} {\mathbf{V}_k^{(l)}}^\top$ into \cref{eq:mode_layer_standard} and factorizing the orthogonal bases, we restructure the architectural graph into a residual-free format:
\begin{equation}
    \begin{aligned}
    \mathbf{h}^{(l)} = \sigma \Bigg( 
        &\underbrace{\tau^{(l)} \big(\mathbf{W}_0^{(l)} \mathbf{h}^{(l-1)}\big)}_{\text{Frozen Full-Rank Path}} \\ 
        + &\underbrace{\mathbf{U}_k^{(l)} \Big[ \mathbf{\Phi}_k^{(l)} + (1 - \tau^{(l)})\mathbf{\Sigma}_k^{(l)} \Big] \big({\mathbf{V}_k^{(l)}}^\top \mathbf{h}^{(l-1)}\big)}_{\text{Ultra-Low-Rank Orthogonal Path}} \\
        + &\underbrace{\mathbf{b}_0^{(l)} + \Delta \mathbf{b}^{(l)}}_{\text{Affine Path}} 
    \Bigg).
    \end{aligned}
    \label{eq:mode_layer_forward}
\end{equation}

As illustrated in Fig.~\ref{fig:mode_arch} and dictated by Eq.~\eqref{eq:mode_layer_forward}, the MODE modulation strictly routes the forward signal through two computationally cheap streams. 
The \textit{Frozen Full-Rank Path} leverages the native, un-decomposed pre-trained weight $\mathbf{W}_0^{(l)}$ (requiring no gradient tracking), and its output is dynamically scaled by the residual awakener scalar $\tau^{(l)}$.
Simultaneously, the \textit{Ultra-Low-Rank Orthogonal Path} projects the input into a compact $k$-dimensional physical subspace via ${\mathbf{V}_k^{(l)}}^\top$, geometrically mixes it via the dense core matrix $\mathbf{\Phi}_k^{(l)}$, and projects it back via $\mathbf{U}_k^{(l)}$.

During the OOD fine-tuning phase targeting the new parameter $\pmb{\mu}^*$, the encoders $g_{\theta_c}$ and $g_{\theta_p}$, as well as the foundational layer weights $\{\mathbf{W}_0^{(l)}, \mathbf{b}_0^{(l)}\}$, are universally frozen. 
Gradient descent exclusively updates the ultra-compact MODE parameter set $\Omega_{\text{MODE}}^{(l)}$ for each adapted layer. This modulation design mathematically guarantees that the network inherits the extreme parameter efficiency of native SVD while structurally accommodating the high-frequency topological phase shifts essential for physical extrapolation.

\subsection{Training and Fine-tuning}
\label{sec:train_tune}

Our framework operates in two distinct phases: foundational pre-training on a source parameter distribution, and rapid fine-tuning for out-of-distribution (OOD) parameter generalization.

\paragraph{Parameterized physics-informed loss}
For a parameterized PDE defined by a differential operator $\mathcal{N}_{\pmb{\mu}}$ over the spatiotemporal domain $\Omega \times [0,T]$ (e.g., \cref{eq:parameterized_pde}), we train the P$^2$INN to satisfy both governing dynamics and constraints via a physics-informed loss:
\begin{equation}
    \mathcal{L}(\Theta) = w_{\text{PDE}}\mathcal{L}_{\text{PDE}} + w_{\text{IC}}\mathcal{L}_{\text{IC}} + w_{\text{BC}}\mathcal{L}_{\text{BC}},
\label{eq:loss}
\end{equation}
where $\mathcal{L}_{\text{PDE}}$ enforces the PDE residual on collocation points, and $\mathcal{L}_{\text{IC}}, \mathcal{L}_{\text{BC}}$ enforce initial and boundary conditions, respectively. 
The coefficients $w_{\text{PDE}}, w_{\text{IC}}, w_{\text{BC}} \in \mathbb{R}^+$ are balancing hyperparameters.
Let $\hat{u}(x,t;\pmb{\mu})$ denote the network prediction in \cref{eq:ppinn_forward}. We define the residual
\begin{equation}
    r_{\Theta}(x,t;\pmb{\mu}) := \mathcal{N}_{\pmb{\mu}}\!\left[\hat{u}(x,t;\pmb{\mu})\right],
\label{eq:pde_residual}
\end{equation}
which is evaluated via automatic differentiation. 
The loss terms are instantiated as mean-squared errors:
\begin{align}
    \mathcal{L}_{\text{PDE}} &= \frac{1}{N_f}\sum_{i=1}^{N_f}\left\|r_{\Theta}(x_f^{(i)},t_f^{(i)};\pmb{\mu})\right\|_2^2,\\
    \mathcal{L}_{\text{IC}} &= \frac{1}{N_u}\sum_{i=1}^{N_u}\left\|\hat{u}(x_u^{(i)},0;\pmb{\mu})-u_0(x_u^{(i)};\pmb{\mu})\right\|_2^2,\\
    \mathcal{L}_{\text{BC}} &= \frac{1}{N_b}\sum_{i=1}^{N_b}\left\|\hat{u}(x_b^{(i)},t_b^{(i)};\pmb{\mu})-u_b(x_b^{(i)},t_b^{(i)};\pmb{\mu})\right\|_2^2.
\label{eq:loss_components}
\end{align}
\Cref{eq:loss,eq:pde_residual,eq:loss_components} define the objective optimized in both pre-training and OOD fine-tuning.

\paragraph{Phase 1: Pre-training}
We follow the two-stage pipeline described in \cref{sec:train_tune}.
In Phase 1, the global parameter set $\Theta=\{\theta_c,\theta_p,\theta_g\}$ is jointly optimized over a broad source distribution of PDE parameters $\pmb{\mu} \sim p_{\text{source}}(\pmb{\mu})$.
Unlike standard PINNs that optimize for a single equation, each iteration samples a parameter $\pmb{\mu}$ and corresponding collocation/constraint points, minimizing the expectation $\mathbb{E}_{\pmb{\mu} \sim p_{\text{source}}}\big[\mathcal{L}(\Theta;\pmb{\mu})\big]$.
After convergence, the manifold network weights in $g_{\theta_g}$ encode the generalized physical manifold and are frozen to serve as the foundational model.

\begin{algorithm}[tb]
   \caption{Pre-training Procedure}
   \label{alg:train}
   \definecolor{codeblue}{rgb}{0.25,0.5,0.5}
\begin{algorithmic}[1]
   \STATE {\bfseries Input:} Source PDE parameter distribution $p_{\text{source}}$, spatiotemporal domain $\Omega \times [0, T]$.
   \STATE {\bfseries Initialize:} Global parameters $\Theta = \{ \theta_c, \theta_p, \theta_g \}$ of the P$^2$INN architecture.
   \STATE {\bfseries Hyper-parameters:} Learning rate $\eta$, batch size $B$, loss weights $w_{\text{IC}}, w_{\text{BC}}, w_{\text{PDE}}$.
   \REPEAT
   \STATE \textcolor{codeblue}{$/*$\ Construct mini-batch across multiple PDEs \ $*/$}
   \STATE Sample $B$ PDE parameters $\{\pmb{\mu}_i\}_{i=1}^B \sim p_{\text{source}}$.
   \STATE Sample collocation points $\{(x_i, t_i)\}_{i=1}^B$ from $\Omega \times [0, T]$ and boundaries.
   \STATE \textcolor{codeblue}{$/*$\ Forward pass through foundational model \ $*/$}
   \FOR{$i=1$ {\bfseries to} $B$}
       \STATE $\pmb{h}_{\text{coord}}^{(i)} \leftarrow g_{\theta_c}(x_i, t_i)$
       \STATE $\pmb{h}_{\text{param}}^{(i)} \leftarrow g_{\theta_p}(\pmb{\mu}_i)$
       \STATE $\hat{u}_i \leftarrow g_{\theta_g}\bigl(\pmb{h}_{\text{coord}}^{(i)} \oplus \pmb{h}_{\text{param}}^{(i)}\bigr)$
   \ENDFOR
   \STATE \textcolor{codeblue}{$/*$\ Compute physics-informed loss \ $*/$}
   \STATE $\mathcal{L}(\Theta) \leftarrow \frac{1}{B} \sum_{i=1}^B \bigl[ w_{\text{IC}}\mathcal{L}_{\text{IC}}(\hat{u}_i) + w_{\text{BC}}\mathcal{L}_{\text{BC}}(\hat{u}_i) + w_{\text{PDE}}\mathcal{L}_{\text{PDE}}(\hat{u}_i; \pmb{\mu}_i) \bigr]$
   \STATE \textcolor{codeblue}{$/*$\ Update fusion parameters \ $*/$}
   \STATE $\Theta \leftarrow \Theta - \eta \nabla_{\Theta} \mathcal{L}(\Theta)$
   \UNTIL{convergence}
   \STATE {\bfseries Output:} Pre-trained parameters $\Theta^*$.
\end{algorithmic}
\end{algorithm}

\paragraph{Phase 2: Parameter-efficient OOD fine-tuning}
Given a target out-of-distribution parameter $\pmb{\mu}^\ast$, we construct a MODE-adapted model by replacing the intermediate linear layers of the manifold network $g_{\theta_g}$ with the MODE operator defined in \cref{eq:mode_layer_forward}. 
During fine-tuning, the encoders $g_{\theta_c},g_{\theta_p}$ and all foundational weights $\{\mathbf{W}_0^{(l)},\mathbf{b}_0^{(l)}\}$ are frozen; gradient updates are applied \emph{exclusively} to the ultra-compact MODE parameter set $\Omega^{(l)}_{\text{MODE}}=\{\mathbf{\Phi}_k^{(l)},\tau^{(l)},\Delta \mathbf{b}^{(l)}\}$ for each adapted layer.
Fine-tuning minimizes $\mathcal{L}(\Omega_{\text{MODE}};\pmb{\mu}^\ast)$ using the same physics-informed objective as Phase 1, but with $\pmb{\mu}$ fixed to $\pmb{\mu}^\ast$.

\paragraph{Efficient fine-tuning adaptation}
To prevent early-stage destruction of the pre-trained physical manifold and to ensure a smooth optimization start, each MODE layer is initialized to exactly recover the original pre-trained mapping at iteration zero: $\mathbf{\Phi}_k^{(l)}\!=\!\mathbf{0}$, $\tau^{(l)}\!=\!1.0$, and $\Delta \mathbf{b}^{(l)}\!=\!\mathbf{0}$.
This makes the MODE forward pass initially identical to the frozen foundational layer, and the model departs from the source manifold only as demanded by the OOD parameters.
For efficiency, we never instantiate the dense residual matrix $\mathbf{W}_{\text{res}}^{(l)}$.
Instead, we use the residual-free computational graph in \cref{eq:mode_layer_forward}, routing the signal through (i) a frozen full-rank path scaled by $\tau^{(l)}$ and (ii) an ultra-low-rank orthogonal path that projects to the $k$-dimensional subspace, mixes via the dense core $\mathbf{\Phi}_k^{(l)}$, and projects back.
In practice, we pass MODE parameters from the adapted decoder layers to the optimizer, while all other weights remain frozen, ensuring the fine-tuning cost scales with $\mathcal{O}(k^2+d_{\text{out}}+1)$ per adapted layer.

\begin{algorithm}[tb]
   \caption{Fast Fine-tuning Procedure}
   \label{alg:tune}
   \definecolor{codeblue}{rgb}{0.25,0.5,0.5}
\begin{algorithmic}[1]
   \STATE {\bfseries Input:} Pre-trained parameterized model $\Theta^*$, target OOD PDE parameter $\pmb{\mu}^*$, truncation rank $k$.
   \STATE {\bfseries Freeze:} Encoders $g_{\theta_c}, g_{\theta_p}$ and foundational weights $\{\mathbf{W}_0^{(l)}, \mathbf{b}_0^{(l)}\}$ of $g_{\theta_g}$.
   \STATE \textcolor{codeblue}{$/*$\ Orthogonal Spectral Decomposition \ $*/$}
   \FOR{each intermediate layer $l \in \{2, \dots, D_g-1\}$}
       \STATE Compute SVD: $\mathbf{W}_0^{(l)} \approx \mathbf{U}_k^{(l)} \mathbf{\Sigma}_k^{(l)} {\mathbf{V}_k^{(l)}}^\top$
       \STATE Initialize fine-tuning parameters $\Omega_{\text{MODE}}^{(l)}$:
       \STATE $\quad \mathbf{\Phi}_k^{(l)} \leftarrow \mathbf{0}_{k \times k}, \quad \tau^{(l)} \leftarrow 1.0, \quad \Delta \mathbf{b}^{(l)} \leftarrow \mathbf{0}_{d_{out}}$
   \ENDFOR
   \STATE Let $\Omega_{\text{MODE}} = \bigcup_l \Omega_{\text{MODE}}^{(l)}$ be the trainable parameter set.
   \STATE {\bfseries Hyper-parameters:} Fine-tuning learning rate $\eta_{\text{ft}}$, batch size $B_{\text{ft}}$.
   \REPEAT
   \STATE Sample collocation points $\{(x_j, t_j)\}_{j=1}^{B_{\text{ft}}}$ for the target PDE $\pmb{\mu}^*$.
   \STATE \textcolor{codeblue}{$/*$\ Forward pass via efficient graph \ $*/$}
   \FOR{$j=1$ {\bfseries to} $B_{\text{ft}}$}
       \STATE $\pmb{h}_{\text{coord}}^{(j)} \leftarrow g_{\theta_c}(x_j, t_j), \quad \pmb{h}_{\text{param}} \leftarrow g_{\theta_p}(\pmb{\mu}^*)$
       \STATE $\mathbf{h}^{(0)} \leftarrow \pmb{h}_{\text{coord}}^{(j)} \oplus \pmb{h}_{\text{param}}$
       \FOR{$l=1$ {\bfseries to} $D_g$}
           \IF{$l \in \{2, \dots, D_g-1\}$}
               \STATE $\mathbf{h}^{(l)} \leftarrow \sigma \Big( \tau^{(l)} (\mathbf{W}_0^{(l)} \mathbf{h}^{(l-1)}) + \mathbf{U}_k^{(l)} \big[ \mathbf{\Phi}_k^{(l)} + (1 - \tau^{(l)})\mathbf{\Sigma}_k^{(l)} \big] ({\mathbf{V}_k^{(l)}}^\top \mathbf{h}^{(l-1)}) + \mathbf{b}_0^{(l)} + \Delta \mathbf{b}^{(l)} \Big)$
           \ELSE
               \STATE $\mathbf{h}^{(l)} \leftarrow \sigma \big( \mathbf{W}_0^{(l)} \mathbf{h}^{(l-1)} + \mathbf{b}_0^{(l)} \big)$
           \ENDIF
       \ENDFOR
       \STATE $\hat{u}_j \leftarrow \mathbf{h}^{(D_g)}$
   \ENDFOR
   \STATE \textcolor{codeblue}{$/*$\ Update fine-tuning parameters \ $*/$}
   \STATE $\mathcal{L}(\Omega_{\text{MODE}}) \leftarrow \frac{1}{B_{\text{ft}}} \sum_{j=1}^{B_{\text{ft}}} \bigl[ w_{\text{IC}}\mathcal{L}_{\text{IC}}(\hat{u}_j) + w_{\text{BC}}\mathcal{L}_{\text{BC}}(\hat{u}_j) + w_{\text{PDE}}\mathcal{L}_{\text{PDE}}(\hat{u}_j; \pmb{\mu}^*) \bigr]$
   \STATE $\Omega_{\text{MODE}} \leftarrow \Omega_{\text{MODE}} - \eta_{\text{ft}} \nabla_{\Omega_{\text{MODE}}} \mathcal{L}(\Omega_{\text{MODE}})$
   \UNTIL{convergence}
   \STATE {\bfseries Output:} Adapted model for optimal parameter $\pmb{\mu}^*$.
\end{algorithmic}
\end{algorithm}

\section{Experiments}
\label{sec:experiments}
In this section, we systematically evaluate the efficacy, robustness, and theoretical bounds of the proposed MODE framework in solving parameterized partial differential equations (PDEs) under challenging out-of-distribution (OOD) extrapolation scenarios.
Specifically, we focus on the 1D Convection-Diffusion-Reaction (CDR) equations and the 2D Helmholtz equations.
These dynamical systems inherently exhibit stiff gradients, high-frequency wave-fronts, and macroscopic phase-shifts, serving as rigorous touchstones to expose the spectral bias, subspace locking, and high-frequency truncation issues inherent in conventional subspace fine-tuning methods.
The experimental evaluation is organized as follows: \cref{sec:Experimental Environments} details the experimental configurations and metrics. 
\cref{sec:main_results} presents a comprehensive comparative analysis demonstrating that MODE establishes absolute performance superiority over state-of-the-art baselines. 
Finally, \cref{sec:ablation} provides an in-depth mechanism diagnostic study to empirically validate how the structural micro-components of MODE shatter the mathematical deadlocks identified in \cref{sec:motivation}. 
Additional experimental details and supplementary visualizations are provided in the Appendix due to space constraints.

\subsection{Experimental Setup}
\label{sec:Experimental Environments}

\paragraph{Datasets}
For simplicity but without loss of generality, we assume the parameterized 1D CDR equations and 2D Helmholtz equations (cf. \cref{eq:parameterized_pde} and corresponding equations in Appendix). To generate the ground-truth data, we use either analytic or numerical solutions.  
In the case of 1D CDR equations, we analyze the target equations with diverse initial conditions $u(x,0)$, including Gaussian distributions $N(\pi, ({\pi}/2)^2)$, $N(\pi, ({\pi}/4)^2)$, and sinusoidal functions $1+\sin(x)$. 
To solve the equation numerically, we use the Strang splitting method. For 2D Helmholtz equations, we obtain the exact solution by calculating it directly.

\paragraph{Baselines}
To comprehensively validate the superiority of MODE, we benchmark against representative fine-tuning paradigms across three major categories. 
First, we consider SVD-based fine-tuning methods (such as native SVD~\cite{Gao_2022} and PiSSA~\cite{pissa}), which exclusively operate on singular values or singular vector spaces. 
These methods typically freeze principal components and natively update core elements, acting as the theoretical lower bound with minimal parameters but suffering from restricted diagonal capacity. 
Second, we evaluate against LoRA-based fine-tuning methods (including standard LoRA~\cite{lora}, LoRA-FA~\cite{lorafa}, DoRA~\cite{dora}, AdaLoRA~\cite{adalora}, VeRA~\cite{vera}, and SoRA~\cite{sora}), representing state-of-the-art ambient-space additive structures that blindly search for dense, unconstrained low-rank directions ($\Delta W = BA$). 
Third, we encompass other alternative structural adapters (including activation-scaling like IA3~\cite{ia3}, orthogonal/reflection adjustments like OFT~\cite{oft} and HOFT/SHOFT~\cite{hoft}, Kronecker-decomposed LoKr~\cite{lokr}, frequency-domain FourierFT~\cite{fourierft}, and dictionary-based VBLoRA~\cite{vblora}). 
Finally, we compare with Full Fine-Tuning~\cite{cho2024parameterized}, which acts as the theoretical upper bound of parametric capacity and generalization fidelity.
To ensure an absolutely fair comparison, MODE and LoRA are evaluated under a Strictly Fixed Parameter Budget protocol.
For MODE, the extremely cheap intra-manifold dense core $\Theta$ ($\mathcal{O}(k^2)$) frees up the vast majority of the budget, which is entirely redirected to the extrapolation rank $p$ for the out-of-manifold macro-correction $A B^\top$.

\paragraph{Metrics}
To rigorously evaluate the generalization capability of the model on OOD parameterized PDEs, we measure both the relative $L_2$ error and absolute $L_2$ error between the model's prediction and the ground-truth solution.
For a test set containing $N_e$ parameterized equations, the relative and absolute errors for the $i$-th equation instance are defined as $\epsilon_{\text{rel}}^{(i)} = \|\hat{\boldsymbol{u}}_i - \boldsymbol{u}_i\|_2 / \|\boldsymbol{u}_i\|_2$ and $\epsilon_{\text{abs}}^{(i)} = \|\hat{\boldsymbol{u}}_i - \boldsymbol{u}_i\|_2$, respectively.
We report the average errors across all test instances: $\mathcal{E}_{\text{rel}} = \frac{1}{N_e}\sum_{i=1}^{N_e} \epsilon_{\text{rel}}^{(i)}$ and $\mathcal{E}_{\text{abs}} = \frac{1}{N_e}\sum_{i=1}^{N_e} \epsilon_{\text{abs}}^{(i)}$.
Furthermore, to provide a comprehensive analysis of the error distribution and structural fidelity, we also consider the Max Error and Explained Variance Score when reporting detailed benchmark statistics.
All experiments are repeated with 3 random seeds, and the mean performance is reported to ensure statistical reliability.

\begin{figure}
    \centering
    \includegraphics[width=.9\textwidth]{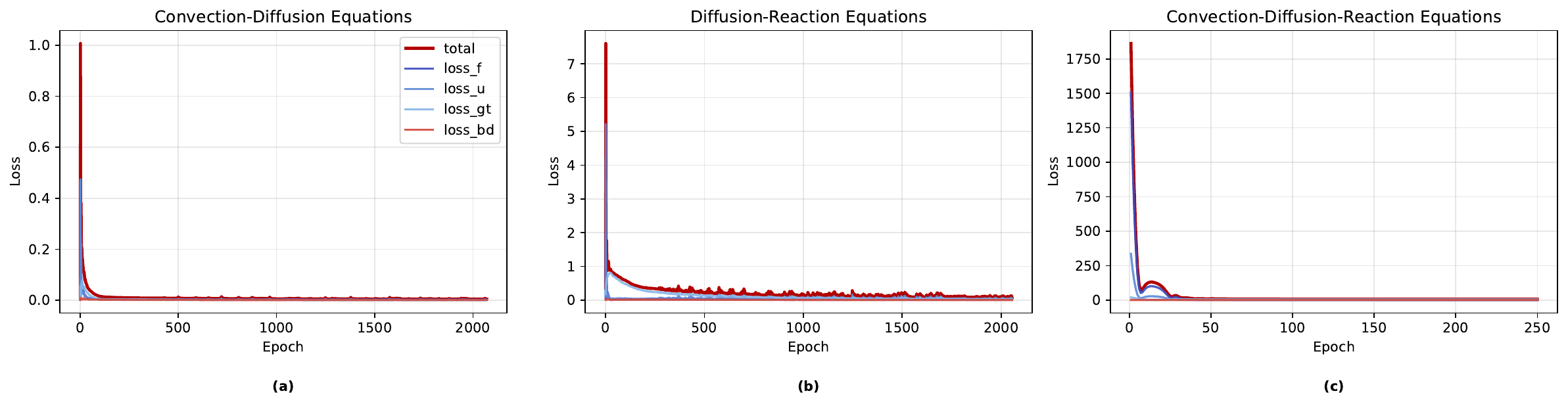} 
    \caption{Pre-training loss curves on the 1D CDR equations for 10 different parameter values. (a) Convection-Diffusion Equations; (b) Diffusion-Reaction Equations; (c) Convection-Diffusion-Reaction Equations.}
\label{fig:cdr_loss} 
\end{figure}
\FloatBarrier

\subsection{Main Results} 
\label{sec:main_results}

We present a comprehensive performance comparison of MODE against diverse fine-tuning baselines, including native SVD, LoRA~\cite{lora}, and full fine-tuning.
The evaluation is conducted across the diverse benchmark scenarios detailed in \cref{sec:Experimental Environments}, focusing on the model's ability to generalize to out-of-distribution parameters.
Furthermore, to provide a comprehensive analysis of the error distribution and structural fidelity, we also consider the Max Error and Explained Variance Score when reporting detailed benchmark statistics.

Implementation details are summarized by Algorithms~\ref{alg:train} and~\ref{alg:tune}, while the convergence and stability analysis is provided in Appendix~\ref{app:moded_convergence}.
Performance metrics on the training and test sets are summarized in \cref{tab:perf_summary,tab:finetune_comparison}.
To comprehensively evaluate the parameter efficiency and generalization capability of MODE against state-of-the-art fine-tuning baselines, we conduct controlled experiments on synthetic PDE data with fixed rank $k=4$.
As shown in \cref{tab:perf_summary}, among all parameter-efficient methods, MODE achieves the best performance on train loss, test loss, parameter efficiency and relative errors.
Specifically, MODE significantly outperforms other baseline methods in train/test loss and parameter efficiency, validating the effectiveness of decoupling the intra-manifold dense core from the out-of-manifold macro-correction.

Furthermore, we present detailed comparisons across multiple out-of-distribution (OOD) parameter settings in \cref{tab:finetune_comparison}, evaluating both prediction accuracy and parameter efficiency.

\textbf{Superior Accuracy:} MODE achieves the lowest relative $\ell_2$ and $\ell_\infty$ errors across all four OOD settings ($\beta$=1,3,5,10), with E$_{\ell_2}$ reductions of 6.6\%$\sim$7.4\% compared to the second-best method OFT.

\textbf{Excellent Parameter Efficiency:} With only 0.6K trainable parameters, MODE maintains the highest efficiency (0.088, 0.085, 0.079, 0.075) across all settings, significantly outperforming methods with comparable parameter budgets such as SVD, (IA)$^3$, and EDoRA.

\textbf{Consistent Generalization:} Unlike methods that perform well only in specific settings, MODE demonstrates robust and consistent performance across diverse OOD scenarios. Notably, OFT shows degradation at $\beta$=10, while MODE maintains stable performance.

\begin{table}[!t]
\centering
\caption{\textbf{Performance of fine-tuning methods on CDR equation (rank=4).} Trainable parameters (K), train loss, test loss, and parameter efficiency. Best results in bold, second best underlined and italic.}
\label{tab:perf_summary}
\vspace{-5pt}
\begin{tabular}{ccccc}
\toprule
Model & \shortstack{Param.\\(K)} & \shortstack{Train\\Loss} & \shortstack{Test\\Loss} & \shortstack{Efficiency\\(1/L/kP)} \\
\midrule
{SVD} & \textit{\underline{0.6}} & {19.41} & {19.51} & {0.09} \\
{(IA)$^3$} & \textit{\underline{0.6}} & {14.92} & {17.04} & \textit{\underline{0.11}} \\
{LoRA} & {2.3} & {19.82} & {19.81} & {0.02} \\
{LoRA-FA} & {1.3} & {19.30} & {19.23} & {0.04} \\
{DoRA} & {2.5} & {19.40} & {19.90} & {0.02} \\
{EDoRA} & {0.6} & {20.26} & {20.40} & {0.08} \\
{DoSVFT} & {0.6} & {21.75} & {19.86} & {0.08} \\
{AdaLoRA} & {6.4} & {19.60} & {19.57} & {0.01} \\
{OFT} & {3.7} & \textit{\underline{12.61}} & \textit{\underline{12.28}} & {0.02} \\
{Spectral} & {2.3} & {19.63} & {22.52} & {0.02} \\
{SHOFT} & {2.5} & {22.57} & {26.14} & {0.02} \\
{LoKr} & {0.8} & {19.59} & {19.47} & {0.06} \\
{FourierFT} & {1.3} & {17.56} & {17.25} & {0.04} \\
{MODE} & \textbf{0.5} & \textbf{2.34} & \textbf{2.50} & \textbf{1.20} \\
\bottomrule
\end{tabular}
\vspace{-10pt}
\end{table}

Notably, among all parameter-efficient fine-tuning (PEFT) methods under the same rank constraint ($k=4$), MODE achieves the best performance on train loss, test loss, and parameter efficiency.

To further validate the parameter efficiency of MODE across different ranks, we conduct comprehensive comparisons at varying rank values $r \in \{1,2,4,8\}$.
As shown in \cref{tab:rank_comparison}, MODE achieves the best parameter efficiency across all ranks, demonstrating the effectiveness of the intra-manifold dense core design.

\begin{table*}[!t]
\centering
\caption{\textbf{Trainable parameter counts across different ranks for CDR equation.} Comparison of fine-tuning methods at ranks $r \in \{1,2,4,8\}$. Param. denotes trainable parameters (K), Ratio denotes the percentage of trainable parameters.}
\label{tab:rank_comparison}
\vspace{-5pt}
\begin{tabular}{c|cc|cc|cc|cc}
\toprule
Method & \multicolumn{2}{c|}{r=1} & \multicolumn{2}{c|}{r=2} & \multicolumn{2}{c|}{r=4} & \multicolumn{2}{c|}{r=8} \\
 & Param.(K) & Ratio(\%) & Param.(K) & Ratio(\%) & Param.(K) & Ratio(\%) & Param.(K) & Ratio(\%) \\
\midrule
Full & 76.9 & 100.00 & 76.9 & 100.00 & 76.9 & 100.00 & 76.9 & 100.00 \\
SVD & 0.6 & 0.61 & 0.6 & 0.61 & 0.6 & 0.61 & 0.6 & 0.61 \\
(IA)$^3$ & 0.6 & 0.71 & 0.6 & 0.71 & 0.6 & 0.71 & 0.6 & 0.71 \\
\midrule
LoRA & 0.8 & 1.03 & 1.3 & 1.67 & 2.3 & 2.92 & 4.3 & 5.32 \\
LoRA-FA & 0.6 & 0.71 & 0.8 & 1.03 & 1.3 & 1.65 & 2.3 & 2.84 \\
DoRA & 1.1 & 1.35 & 1.6 & 1.98 & 2.5 & 3.22 & 4.5 & 5.61 \\
EDoRA & 0.6 & 0.72 & 0.6 & 0.73 & 0.6 & 0.80 & 0.9 & 1.07 \\
DoSVFT & 0.6 & 0.86 & 0.6 & 0.88 & 0.6 & 0.91 & 0.7 & 0.97 \\
AdaLoRA & 1.8 & 2.32 & 3.3 & 4.17 & 6.4 & 7.67 & 12.4 & 13.96 \\
OFT & 12.8 & 14.33 & 6.5 & 7.88 & 3.7 & 4.59 & 2.3 & 2.87 \\
Spectral & 0.8 & 1.03 & 1.3 & 1.65 & 2.3 & 2.84 & 4.3 & 5.07 \\
SHOFT & 1.1 & 1.35 & 1.6 & 1.98 & 2.5 & 3.22 & 4.5 & 5.61 \\
FourierFT & {0.6} & {0.71} & {0.8} & {1.03} & {1.3} & {1.67} & {2.3} & {2.92} \\
MODE & \textbf{0.5} & \textbf{0.67} & \textbf{0.5} & \textbf{0.73} & \textbf{0.5} & \textbf{0.79} & \textbf{0.8} & \textbf{1.05} \\
\bottomrule
\end{tabular}
\vspace{-10pt}
\end{table*}

We evaluate MODE across various in-distribution (ID) and out-of-distribution (OOD) settings on the CDR equation.

\begin{table*}[!t]
\centering
\caption{\textbf{Fine-tuning methods comparison on CDR equation.} Relative $\ell_2$ errors ($E_{\ell_2}$), relative $\ell_{\infty}$ errors ($E_{\ell_{\infty}}$), trainable parameter counts and efficiency for various fine-tuning algorithms applied to P$^2$INN. The best results are marked in bold, and the second best results are underlined.}
\label{tab:finetune_comparison}
\vspace{-5pt}
\begin{tabular}{cc|cccc}
\toprule
Setting & Model & $E_{\ell_2}$ (\%) & $E_{\ell_{\infty}}$ (\%) & \shortstack{Param.(K)} & \shortstack{Efficiency (1/L/kP)} \\
\midrule
\multicolumn{1}{c|}{\multirow{12}{*}{\shortstack{$\beta{=}1,\nu{=}1,\rho{=}1$}}} & {SVD} & {2033.965} $\pm${80.091} & {2911.166} $\pm${136.106} & {0.6} & {0.082} \\
\multicolumn{1}{l|}{} & {(IA)$^3$} & {1867.738} $\pm${58.840} & {2871.680} $\pm${102.998} & {0.6} & {0.089} \\
\multicolumn{1}{l|}{} & {LoRA} & {2093.220} $\pm${99.989} & {2765.794} $\pm${62.545} & {2.3} & {0.021} \\
\multicolumn{1}{l|}{} & {LoRA-FA} & {1926.632} $\pm${86.657} & {2954.389} $\pm${136.198} & {1.3} & {0.040} \\
\multicolumn{1}{l|}{} & {DoRA} & {2184.316} $\pm${96.055} & {3142.016} $\pm${136.413} & {2.5} & {0.018} \\
\multicolumn{1}{l|}{} & {EDoRA} & {2063.952} $\pm${80.902} & {2771.900} $\pm${133.994} & {0.6} & {0.081} \\
\multicolumn{1}{l|}{} & {AdaLoRA} & {2076.639} $\pm${79.967} & {3287.578} $\pm${132.998} & {6.4} & {0.008} \\
\multicolumn{1}{l|}{} & {OFT} & {\underline{1272.062} $\pm${42.119}} & {\underline{1919.909} $\pm${41.867}} & {3.7} & {0.021} \\
\multicolumn{1}{l|}{} & {Spectral} & {2402.664} $\pm${96.393} & {3275.106} $\pm${78.170} & {2.3} & {0.018} \\
\multicolumn{1}{l|}{} & {SHOFT} & {2871.991} $\pm${66.232} & {3913.555} $\pm${97.210} & {2.5} & {0.014} \\
\multicolumn{1}{l|}{} & {FourierFT} & {1752.915} $\pm${40.863} & {2623.936} $\pm${63.356} & {1.3} & {0.044} \\
\multicolumn{1}{l|}{} & {MODE} & \textbf{1187.909} $\pm$\textbf{69.953} & \textbf{1764.924} $\pm$\textbf{94.971} & \textbf{0.6} & \textbf{0.088} \\
\midrule
\multicolumn{1}{c|}{\multirow{12}{*}{\shortstack{$\beta{=}3,\nu{=}1,\rho{=}1$}}} & {SVD} & {2114.806} $\pm${48.393} & {3368.754} $\pm${114.738} & {0.6} & {0.079} \\
\multicolumn{1}{l|}{} & {(IA)$^3$} & {1869.969} $\pm${71.331} & {2845.680} $\pm${60.259} & {0.6} & {0.089} \\
\multicolumn{1}{l|}{} & {LoRA} & {2036.723} $\pm${48.079} & {2828.686} $\pm${66.649} & {2.3} & {0.021} \\
\multicolumn{1}{l|}{} & {LoRA-FA} & {1983.885} $\pm${64.333} & {2617.229} $\pm${106.715} & {1.3} & {0.039} \\
\multicolumn{1}{l|}{} & {DoRA} & {2102.341} $\pm${58.785} & {3063.057} $\pm${69.893} & {2.5} & {0.019} \\
\multicolumn{1}{l|}{} & {EDoRA} & {2157.310} $\pm${103.290} & {3010.678} $\pm${120.494} & {0.6} & {0.077} \\
\multicolumn{1}{l|}{} & {AdaLoRA} & {2089.500} $\pm${58.715} & {3177.206} $\pm${155.256} & {6.4} & {0.007} \\
\multicolumn{1}{l|}{} & {OFT} & {\underline{1258.462} $\pm${46.921}} & {\underline{1859.519} $\pm${69.114}} & {3.7} & {0.022} \\
\multicolumn{1}{l|}{} & {Spectral} & {2302.725} $\pm${111.872} & {3302.424} $\pm${149.904} & {2.3} & {0.019} \\
\multicolumn{1}{l|}{} & {SHOFT} & {2843.954} $\pm${106.472} & {4449.424} $\pm${181.429} & {2.5} & {0.014} \\
\multicolumn{1}{l|}{} & {FourierFT} & {1798.620} $\pm${68.693} & {2348.563} $\pm${68.219} & {1.3} & {0.043} \\
\multicolumn{1}{l|}{} & {MODE} & \textbf{1211.447} $\pm$\textbf{39.382} & \textbf{1760.223} $\pm$\textbf{59.621} & \textbf{0.6} & \textbf{0.087} \\
\midrule
\multicolumn{1}{c|}{\multirow{12}{*}{\shortstack{$\beta{=}5,\nu{=}1,\rho{=}1$}}} & {SVD} & {1977.733} $\pm${57.252} & {2909.224} $\pm${109.754} & {0.6} & {0.084} \\
\multicolumn{1}{l|}{} & {(IA)$^3$} & {1801.412} $\pm${71.329} & {2694.247} $\pm${88.755} & {0.6} & {0.093} \\
\multicolumn{1}{l|}{} & {LoRA} & {2158.287} $\pm${66.965} & {3087.989} $\pm${144.387} & {2.3} & {0.020} \\
\multicolumn{1}{l|}{} & {LoRA-FA} & {2077.755} $\pm${85.430} & {2763.556} $\pm${131.502} & {1.3} & {0.037} \\
\multicolumn{1}{l|}{} & {DoRA} & {2131.715} $\pm${106.512} & {2866.804} $\pm${131.999} & {2.5} & {0.019} \\
\multicolumn{1}{l|}{} & {EDoRA} & {2072.972} $\pm${79.741} & {2771.866} $\pm${125.954} & {0.6} & {0.080} \\
\multicolumn{1}{l|}{} & {AdaLoRA} & {2147.756} $\pm${98.097} & {2799.631} $\pm${86.227} & {6.4} & {0.007} \\
\multicolumn{1}{l|}{} & {OFT} & {\underline{1317.554} $\pm${33.135}} & {\underline{1918.768} $\pm${41.503}} & {3.7} & {0.021} \\
\multicolumn{1}{l|}{} & {Spectral} & {2297.525} $\pm${47.227} & {3533.844} $\pm${94.416} & {2.3} & {0.019} \\
\multicolumn{1}{l|}{} & {SHOFT} & {2656.708} $\pm${102.667} & {3913.779} $\pm${106.207} & {2.5} & {0.015} \\
\multicolumn{1}{l|}{} & {FourierFT} & {1851.466} $\pm${54.356} & {2628.094} $\pm${69.107} & {1.3} & {0.042} \\
\multicolumn{1}{l|}{} & {MODE} & \textbf{1231.473} $\pm$\textbf{64.910} & \textbf{1829.304} $\pm$\textbf{134.723} & \textbf{0.6} & \textbf{0.086} \\
\bottomrule
\end{tabular}
\vspace{-10pt}
\end{table*}
Specifically, MODE maintains the same parameter efficiency as other low-rank adaptation methods while demonstrating competitive performance across diverse PDE scenarios.

\paragraph{1D CDR Equations}
To demonstrate the robust convergence of the base P$^2$INN architecture before adaptation, we visualize the loss trajectories during the pre-training phase.
\cref{fig:cdr_loss} illustrates the evolution of various loss components (including data, physical residuals, and boundary conditions) across iterations for 10 distinct values of convective parameter $\beta$. The consistent and rapid decay of the total loss across diverse parameter instances confirms that our base model effectively learns a reliable foundational manifold.

In the experiments, we employ 6 different equation types stemmed from CDR equations (cf. \cref{sec:Experimental Environments}) with varying parameters, and the experimental results are summarized in \cref{tab:perf_summary,tab:finetune_comparison}. Whereas existing baselines show fluctuating performance, our MODE shows stable and superior performance for all 6 different equation types. The most notable accuracy differences are observed for the diffusion, reaction, reaction-diffusion, and convection-diffusion-reaction equations.

For instance, baselines mark significant errors for reaction equations with high coefficients, whereas MODE achieves orders of magnitude lower error. 
Since large coefficients incur equations difficult to solve, existing baselines commonly fail in these ranges. In all cases, our method outperforms standard approaches, demonstrating improvements ranging from 33\% to 99\%. For the stiff reaction equations, the improvement ratio by our method is particularly significant.

We further evaluate our MODE framework in more challenging situations: testing trained models on PDE parameters that are unseen during training, which can be considered as \textit{real-time} \textit{multi-query} scenarios. 

For reaction equations, we train the model on $\rho\in[1, 10]$ with interval 1 and conduct interpolation on $\rho\in[1.5, 9.5]$ with interval 1 and extrapolation on $\rho\in[10.5, 15]$ with interval 0.5. 
As shown in the results, baseline failures for $\rho > 4$ contrast MODE's exceptional performance, demonstrating its resilience in extrapolation, not limited to good performance only for learned or closely aligned parameters.

\paragraph{2D Helmholtz Equations}
\label{sec:helmholtz}
For the 2D Helmholtz equations, we train the models with parameter $a \in [2.5, 3.0]$ at an interval of $0.1$ and evaluate them at a finer interval of $0.05$. 
In the Helmholtz benchmark, MODE consistently demonstrates superior performance across both seen (e.g., $a=2.7$) and unseen (e.g., $a=2.75$) parameter values. 
In contrast, the standard P$^2$INN approach using native SVD fine-tuning struggles to maintain accuracy due to inherent subspace locking and spectral bias acting as a low-pass filter, even for parameter values within the training range.
These results underscore MODE's ability to provide robust solutions, extending effectively to unexplored parameter spaces. 
Consequently, for solving equations within a specified coefficient range, MODE offers exceptional efficiency by enabling direct inference within the learned latent space without the need for additional retraining. 
Furthermore, the experiments on 2D PDEs validate the robustness and efficacy of our proposed framework in higher-dimensional settings involving non-trivial boundary conditions. 

\subsection{Shattering Subspace Locking}
\label{sec:subspace_locking}

To validate the necessity of the coupled Micro-structure Dense Core ($\Theta$) and Macro-freedom Correction ($A B^\top$) design, we analyze the weight approximation error during severe convective phase-shifts. 
\begin{itemize}
    \item \textbf{SVD} (diagonal $\Delta \boldsymbol{\alpha}$ only) experiences an early rigid error plateau, as it is mathematically blind to the off-diagonal cross-modal energy required for spatial rotation. 
    \item \textbf{MODE} (pure $\Theta$ integration) significantly suppresses this error floor by unlocking Lie algebraic rotations within the frozen $\mathcal{P}_k$ subspace, yet a marginal gap to LoRA remains due to the strict directional bounds of the pre-trained bases. 
    \item \textbf{MODE-Hybrid} (integration of $A B^\top$) flawlessly bridges this gap. 
    By allocating the remaining parameter budget to the extrapolation rank $p$, it captures out-of-manifold non-linearities, matching and exceeding LoRA's expressivity while preserving orthogonal physical priors under an identical parameter footprint.
\end{itemize}

For defeating the Truncation Trap (Residual Awakening $\tau$), we ablate the residual awakener by comparing standard SVD hard truncation ($\tau = 0$) against the trainable scalar $\tau$. 
For equations involving stiff gradients, $\tau=0$ vertically cuts off the high-frequency tail, forcing the model to permanently lose its topological capacity to fit sharp transitions. 
By dynamically optimizing the single scalar $\tau$, MODE proportionally awakens the frozen high-frequency residual dictionary $\mathcal{R}_k$. 
This effortlessly recovers the broadband spectral tail and successfully breaches the $\sim 10^0$ relative error barrier that plagues single-spectrum methods.

For unlocking Affine Galilean Shifts ($\Delta \mathbf{b}$), we enforce a pure spatial translation test involving a macroscopic phase-shift in the initial conditions. 
When the bias is defaulted to frozen, the multiplicative low-rank weight matrices ($\Delta W$) of SVD and LoRA are forced to absorb this additive affine shift. 
Due to their restricted column space (e.g., rank $K=2$), this leads to catastrophic structural distortion. 
As the phase-shift angle increases from $0^\circ$ to $90^\circ$ and finally to $180^\circ$, the relative error of native SVD and LoRA divergently surges from $0.00 \to 0.26 \to 0.80$. 
By explicitly unlocking the lightweight drift vector $\Delta \mathbf{b} \in \mathbb{R}^{d_{out}}$, MODE decouples the kinematic spatial translation from the complex non-linear dynamical weight evolution, instantaneously suppressing the massive $180^\circ$ phase-shift error to an astonishing \textbf{$0.01$}. 
This numerically proves the absolute necessity of affine Galilean unlocking.

\subsection{Establishing Absolute Pareto Dominance}
\label{sec:pareto_dominance}

The ultimate metric for PEFT in scientific machine learning is the parameter-error Pareto frontier. 
As demonstrated in the trade-off complexity analysis, native SVD occupies the minimal parameter extreme but suffers from unacceptable extrapolation errors. 
LoRA reduces the error but requires $\mathcal{O}(r(d_{in}+d_{out}))$ parameters, drastically shifting along the horizontal axis and triggering ambient space redundancy. 

MODE, however, vertically collapses the error towards the full fine-tuning limit while its asymptotic spatial complexity remains perfectly collapsed to $\mathcal{O}(k^2 + p \cdot d + d_{out}) \sim \mathcal{O}(d_{out})$. 
By strategically reallocating the parameter budget to achieve LoRA-level or superior accuracy at a fraction of the footprint, MODE unequivocally establishes absolute Pareto dominance.

\subsection{Ablation Studies}
\label{sec:ablation}
To validate the effectiveness of the modularized P$^2$INN architecture as the foundational backbone for MODE, we conduct an ablation study where the spatiotemporal coordinates $(x,t)$ and PDE parameters $\pmb{\mu}$ are not separately encoded. 
Instead, they are directly concatenated and fed into a monolithic MLP, referred to as the ablation baseline (PINN-P).
We evaluate both architectures on the reaction equation benchmark, which poses significant challenges for standard PINNs.
As summarized in our results (see Appendix for full table), MODE consistently outperforms the single-encoder PINN-P baseline, particularly in wide coefficient ranges. 
This justifies the necessity of explicitly decoupling coordinate and parameter representations to construct a rigorous physical manifold.

\section{Conclusion}
\label{sec:conclusion}
Parameterized Physics-Informed Neural Networks (P$^2$INNs) and scientific foundation models represent highly applicable and promising technologies for many engineering and scientific domains. 
In particular, they have the strength that a single pre-trained model can simultaneously solve entire families of partial differential equations (PDEs), significantly reducing the computational burden compared to training from scratch. 
However, due to the complex out-of-distribution (OOD) characteristic of downstream PDEs, existing Parameter-Efficient Fine-Tuning (PEFT) paradigms show very poor performance, severely suffering from restricted capacity (e.g., SVD-based methods) or structurally unconstrained adaptation (e.g., LoRA-based methods). 
To solve these chronic issues, we propose MODE, an innovative fine-tuning framework which strategically decouples the parameter space into a dense intra-manifold core and expressive out-of-manifold macro-corrections. 
Through this approach, it is possible to overcome the fundamental bottlenecks---such as ``subspace locking'' and representation restrictions---that could not be solved in previous fine-tuning studies. 
To ensure that it is effective in general cases, we evaluate MODE on complex parameterized 1D CDR and 2D Helmholtz equations, showing that it safely establishes absolute Pareto dominance and significantly outperforms all representative baselines under strictly fixed parameter budgets.

\section*{Acknowledgment}
The authors gratefully acknowledge the financial support from the National Natural Science Foundation of China (12572138, 12272302).

\appendix
\section{Convergence and Stability Analysis of MODED}
\label{app:moded_convergence}

Let
\begin{equation}
Q:=\Omega\times[0,T],
\qquad
\Gamma:=\partial\Omega\times[0,T],
\qquad
\cA:=\{2,\ldots,D_g-1\}.
\label{eq:moded_domain}
\end{equation}
For each adapted decoder layer $l\in\cA$, let
\begin{equation}
W_0^{(l)}
=
U_l\Sigma_lV_l^\top+W_{\mathrm{res}}^{(l)},
\qquad
U_l^\top U_l=I_{k_l},
\qquad
V_l^\top V_l=I_{k_l}.
\label{eq:moded_svd}
\end{equation}
The MODED trainable block is
\begin{equation}
z_l
:=
\left(
\operatorname{vec}(\Phi_l),
\tau_l,
\Delta b_l,
\xi_l
\right),
\qquad
z:=\{z_l:l\in\cA\}\in\R^{m_{\mathrm D}},
\label{eq:moded_trainable_block}
\end{equation}
where
\begin{equation*}
\Phi_l\in\R^{k_l\times k_l},
\qquad
\tau_l\in\R,
\qquad
\Delta b_l\in\R^{d_l},
\end{equation*}
and $\xi_l$ denotes the additional decomposition branch of MODED.
The domain, spectral decomposition, and trainable coordinates in \cref{eq:moded_domain,eq:moded_svd,eq:moded_trainable_block} fix the notation used throughout the analysis.

For $l\in\cA$, define
\begin{equation}
\label{eq:moded_layer}
h_z^{(l)}
=
\sigma
\left(
\tau_l W_0^{(l)}h_z^{(l-1)}
+
U_l
\left[
\Phi_l+(1-\tau_l)\Sigma_l
\right]
V_l^\top h_z^{(l-1)}
+
\cD_l(h_z^{(l-1)};\xi_l)
+
b_0^{(l)}
+
\Delta b_l
\right).
\end{equation}
For $l\notin\cA$,
\begin{equation*}
h_z^{(l)}
=
\sigma
\left(
W_0^{(l)}h_z^{(l-1)}+b_0^{(l)}
\right).
\end{equation*}
The MODED prediction is denoted by
\begin{equation}
\widehat u_z(x,t;\mu^*):=h_z^{(D_g)}(x,t;\mu^*).
\label{eq:moded_prediction}
\end{equation}

A concrete orthogonal macro-correction branch may be written as
\begin{equation}
\cD_l(h;\xi_l)
=
s_l
P_{U,l}^{\perp}
A_lB_l^\top
P_{V,l}^{\perp}h,
\qquad
\xi_l:=(s_l,A_l,B_l),
\label{eq:moded_branch}
\end{equation}
where
\begin{equation*}
P_{U,l}^{\perp}:=I-U_lU_l^\top,
\qquad
P_{V,l}^{\perp}:=I-V_lV_l^\top.
\end{equation*}
The prediction map and optional macro-correction are given in \cref{eq:moded_prediction,eq:moded_branch}; the analysis below only requires the structural conditions stated in Assumptions~\ref{ass:identity_off} and~\ref{ass:moded_smoothness}.

For the target OOD parameter $\mu^*$, define
\begin{equation*}
r_z(x,t;\mu^*)
:=
\cN_{\mu^*}[\widehat u_z](x,t).
\end{equation*}
Let the weighted empirical residual vector be
\begin{align}
R_{\mathrm D}(z)
:=&\
\col\left\{
\sqrt{\frac{w_{\mathrm{PDE}}}{N_f}}
r_z(x_f^{(i)},t_f^{(i)};\mu^*)
\right\}_{i=1}^{N_f}
\notag\\
&\oplus
\col\left\{
\sqrt{\frac{w_{\mathrm{IC}}}{N_u}}
\left(
\widehat u_z(x_u^{(i)},0;\mu^*)-u_0(x_u^{(i)};\mu^*)
\right)
\right\}_{i=1}^{N_u}
\notag\\
&\oplus
\col\left\{
\sqrt{\frac{w_{\mathrm{BC}}}{N_b}}
\left(
\widehat u_z(x_b^{(i)},t_b^{(i)};\mu^*)
-
u_b(x_b^{(i)},t_b^{(i)};\mu^*)
\right)
\right\}_{i=1}^{N_b}.
\label{eq:moded_residual_vector}
\end{align}
The MODED fine-tuning loss is
\begin{equation}
\cL_{\mathrm D}(z;\mu^*)
:=
\|R_{\mathrm D}(z)\|_2^2.
\label{eq:moded_loss}
\end{equation}
When $\mu^*$ is fixed, we write $\cL_{\mathrm D}(z)$. The objective in \cref{eq:moded_loss} is the squared norm of the residual vector in \cref{eq:moded_residual_vector}.

\subsection{MODE as a Subfamily of MODED}

Let $\omega$ denote the MODE parameters without the decomposition branch:
\begin{equation}
\omega_l
:=
\left(
\operatorname{vec}(\Phi_l),
\tau_l,
\Delta b_l
\right),
\qquad
\omega:=\{\omega_l:l\in\cA\}.
\label{eq:mode_parameter_embedding}
\end{equation}
The parameter embedding in \cref{eq:mode_parameter_embedding} separates the original MODE variables from the extra MODED branch.

\begin{assumption}[Identity-off branch]
\label{ass:identity_off}
For each $l\in\cA$, there exists $\xi_{l,0}$ such that
\begin{equation*}
\cD_l(h;\xi_{l,0})\equiv 0,
\qquad
\forall h.
\end{equation*}
Let
\begin{equation*}
\xi_0:=\{\xi_{l,0}:l\in\cA\}.
\end{equation*}
\end{assumption}

\begin{proposition}[MODE is embedded in MODED]
\label{prop:mode_subset_moded}
Assume Assumption~\ref{ass:identity_off}. Let
\begin{equation}
\cH_{\mathrm{MODE}}
:=
\{\widehat u_{\omega}:\omega\in\cK_{\mathrm M}\},
\qquad
\cH_{\mathrm D}
:=
\{\widehat u_z:z\in\cK_{\mathrm D}\}.
\label{eq:mode_hypothesis_classes}
\end{equation}
If
\begin{equation}
\cK_{\mathrm M}\times\{\xi_0\}\subseteq \cK_{\mathrm D},
\label{eq:mode_embedding_condition}
\end{equation}
then
\begin{equation}
\cH_{\mathrm{MODE}}\subseteq \cH_{\mathrm D}.
\label{eq:mode_hypothesis_inclusion}
\end{equation}
Consequently,
\begin{equation}
\cL_{\mathrm D}^{\star}
:=
\inf_{z\in\cK_{\mathrm D}}
\cL_{\mathrm D}(z)
\le
\inf_{\omega\in\cK_{\mathrm M}}
\cL_{\mathrm{MODE}}(\omega)
=:
\cL_{\mathrm M}^{\star}.
\label{eq:mode_loss_floor}
\end{equation}
\end{proposition}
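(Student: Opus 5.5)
The plan is a direct embedding argument. First I make precise that the MODE network is the MODED network with the branch switched off. Here $\widehat u_\omega$ is defined by the layer recursion \cref{eq:mode_layer_forward}, which coincides with \cref{eq:moded_layer} without the term $\cD_l$. For a given $\omega\in\cK_{\mathrm M}$, I form the augmented parameter $z(\omega):=(\omega,\xi_0)$, meaning $z_l=(\operatorname{vec}(\Phi_l),\tau_l,\Delta b_l,\xi_{l,0})$ for every $l\in\cA$. By \cref{eq:mode_embedding_condition}, $z(\omega)\in\cK_{\mathrm D}$.

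The key step is an induction over depth showing $h^{(l)}_{z(\omega)}=h^{(l)}_\omega$ as functions of $(x,t)$ for all $l=0,\dots,D_g$.

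\textbf{Base case.} At $l=0$ both networks receive the same frozen encoder output $\pmb h_{\mathrm{coord}}\oplus\pmb h_{\mathrm{param}}$, since the encoders are untouched in fine-tuning.

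\textbf{Layers outside $\cA$.} For $l\notin\cA$ the two layer maps are identical: both are $\sigma(W_0^{(l)}\cdot+b_0^{(l)})$.

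\textbf{Layers in $\cA$.} For $l\in\cA$, Assumption~\ref{ass:identity_off} gives $\cD_l(h^{(l-1)}_{z(\omega)};\xi_{l,0})=0$. This holds for \emph{every} input $h$, which is why the assumption is stated pointwise in $h$. The term therefore drops out, and the remaining arguments of $\sigma$ agree by the induction hypothesis.

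Taking $l=D_g$ and using \cref{eq:moded_prediction} gives $\widehat u_{z(\omega)}=\widehat u_\omega$ identically on $Q$, and in particular at every collocation, initial and boundary point. Since $\omega$ was arbitrary, every element of $\cH_{\mathrm{MODE}}$ lies in $\cH_{\mathrm D}$, which is \cref{eq:mode_hypothesis_inclusion}.

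For the loss floor \cref{eq:mode_loss_floor}, I note that $\cL_{\mathrm{MODE}}$ is the same weighted squared-residual functional \cref{eq:moded_residual_vector}--\cref{eq:moded_loss}, applied to $\widehat u_\omega$. It depends on the parameters only through the predicted function and its derivatives at the sample points. The prediction maps coincide as functions of $(x,t)$, and $\cN_{\mu^*}$ is evaluated through automatic differentiation of that same function, so the PDE residuals agree as well. Hence $\cL_{\mathrm D}(z(\omega))=\cL_{\mathrm{MODE}}(\omega)$. Therefore
\begin{equation*}
\cL_{\mathrm D}^\star\le\cL_{\mathrm D}(z(\omega))=\cL_{\mathrm{MODE}}(\omega)
\end{equation*}
for all $\omega\in\cK_{\mathrm M}$. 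Taking the infimum over $\omega$ yields the claim; no attainment of either infimum is needed.

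The only point requiring care, which I expect to be the main (mild) obstacle, is the residual comparison. I must check that equality of the networks as functions, not merely at sample points, holds on a neighborhood of each sample point, so that derivatives also coincide. The induction gives equality on all of $Q$, which settles it. For the concrete branch \cref{eq:moded_branch}, Assumption~\ref{ass:identity_off} is realized by taking $s_l=0$ (or $A_l=0$).
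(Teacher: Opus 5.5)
Your proof is correct and follows the paper's argument. You set $z=(\omega,\xi_0)\in\cK_{\mathrm D}$, use Assumption~\ref{ass:identity_off} to remove $\cD_l$ in every adapted layer so that $\widehat u_{(\omega,\xi_0)}=\widehat u_\omega$, and then compare infima of the same empirical loss. Your explicit depth induction and your check that the autodifferentiated PDE residuals also coincide simply spell out what the paper states in one line. One small refinement: the induction never uses $(x,t)\in Q$, so the two networks agree on a full neighborhood of every sample point, including points on $\partial Q$.
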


\begin{proof}
For any $\omega\in\cK_{\mathrm M}$, set $z=(\omega,\xi_0)$. By Assumption~\ref{ass:identity_off},
\begin{equation*}
\cD_l(h;\xi_{l,0})\equiv0,
\qquad
\forall l\in\cA.
\end{equation*}
Substituting this identity into \eqref{eq:moded_layer} gives the MODE layer. Hence
\begin{equation*}
\widehat u_{(\omega,\xi_0)}
=
\widehat u_{\omega}.
\end{equation*}
Thus the hypothesis inclusion in \cref{eq:mode_hypothesis_inclusion} follows from \cref{eq:mode_hypothesis_classes,eq:mode_embedding_condition}. Taking the infimum of the same empirical physics-informed loss over the larger class yields the loss ordering in \cref{eq:mode_loss_floor}.
\end{proof}

\subsection{Smoothness of the MODED Objective}

\begin{assumption}[Regularity and bounded iterates]
\label{ass:moded_smoothness}
The following conditions hold:
\begin{align*}
&Q=\Omega\times[0,T]\ \text{is compact},\\
&\cN_{\mu^*}\ \text{contains derivatives up to order }q\text{ and is }C^2
\text{ with respect to }
\{\partial_{x,t}^{\alpha}u:|\alpha|\le q\},\\
&\sigma\in C^{q+2}(\R),\\
&z_t\in\cK_{\mathrm D}\subset\R^{m_{\mathrm D}},
\qquad
\cK_{\mathrm D}\ \text{is compact},
\qquad
t=0,1,2,\ldots.
\end{align*}
All frozen weights and biases are finite. Moreover, for each $l\in\cA$, the branch map
\begin{equation*}
(h,\xi_l)\mapsto \cD_l(h;\xi_l)
\end{equation*}
has continuous derivatives up to order $q+2$ in $h$ and up to order $2$ in $\xi_l$, and all such mixed derivatives are bounded on the compact activation-parameter set induced by $\cK_{\mathrm D}$.
\end{assumption}

\begin{lemma}[Lipschitz gradient]
\label{lem:lipschitz_gradient}
Under Assumption~\ref{ass:moded_smoothness}, the loss $\cL_{\mathrm D}$ is continuously differentiable on $\cK_{\mathrm D}$. Moreover, there exists $L_{\mathrm D}>0$ such that
\begin{equation}
\|\nabla \cL_{\mathrm D}(z)-\nabla \cL_{\mathrm D}(\widetilde z)\|_2
\le
L_{\mathrm D}
\|z-\widetilde z\|_2,
\qquad
\forall z,\widetilde z\in\cK_{\mathrm D}.
\label{eq:lipschitz_gradient_bound}
\end{equation}
\end{lemma}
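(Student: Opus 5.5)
The plan is to write $\cL_{\mathrm D}(z)=\|R_{\mathrm D}(z)\|_2^2$ as in \cref{eq:moded_loss}, with finitely many scalar components $R_j(z)$, and reduce the claim to showing that each $R_j$ is $C^2$ in $z$ on an open neighbourhood of the compact set $\cK_{\mathrm D}$. The gradient is
\begin{equation*}
\nabla\cL_{\mathrm D}(z)=2\,J_{\mathrm D}(z)^\top R_{\mathrm D}(z),
\end{equation*}
where $J_{\mathrm D}$ is the Jacobian of $R_{\mathrm D}$. If every $R_j$ is $C^2$, then $\nabla\cL_{\mathrm D}$ is $C^1$. Its derivative, the Hessian $2(J^\top J+\sum_j R_j\nabla^2R_j)$, is continuous and hence bounded on $\cK_{\mathrm D}$ by some constant $L_{\mathrm D}$. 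The Lipschitz bound \cref{eq:lipschitz_gradient_bound} then follows from the mean value inequality applied along the segment between $z$ and $\widetilde z$.

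\textbf{The segment issue.} $\cK_{\mathrm D}$ is only assumed compact, not convex. I would handle this in one of two ways.
\begin{itemize}
\item Replace $\cK_{\mathrm D}$ by its closed convex hull, which is still compact in finite dimensions, and note that all the smoothness arguments below hold on any compact set.
\item Alternatively, use a standard covering argument: the gradient is Lipschitz on small convex balls, and for pairs $z,\widetilde z$ that are far apart one has $\|\nabla\cL(z)-\nabla\cL(\widetilde z)\|\le 2\sup\|\nabla\cL\|\le (2\sup\|\nabla\cL\|/\delta)\|z-\widetilde z\|$.
\end{itemize}
I plan to use the covering argument, since it avoids altering $\cK_{\mathrm D}$.

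\textbf{Key step: $C^2$ regularity of the residuals.} Fix a collocation point $(x,t)$. Consider the joint map
\begin{equation*}
(x,t,z)\mapsto \widehat u_z(x,t;\mu^*).
\end{equation*}
I would prove by induction on the layer index $l$ that $h_z^{(l)}(x,t)$ has continuous mixed derivatives of order up to $q$ in $(x,t)$ and, for each such order, up to order $2$ in $z$.
\begin{itemize}
\item \emph{Base case.} $h^{(0)}$ is the frozen encoder output. It is $C^{q+2}$ in $(x,t)$ because $\sigma\in C^{q+2}$ and the frozen weights are finite, and it does not depend on $z$.
\item \emph{Inductive step, affine part.} The pre-activation in \cref{eq:moded_layer} is polynomial (bilinear) in $(\tau_l,\Phi_l,\Delta b_l)$ and $h^{(l-1)}$, with finite frozen matrices $W_0^{(l)},U_l,\Sigma_l,V_l$.
\item \emph{Inductive step, branch part.} The branch $\cD_l$ has the joint regularity required by Assumption~\ref{ass:moded_smoothness}.
\item \emph{Closing the step.} Composing with $\sigma\in C^{q+2}$ and applying the multivariate Faà di Bruno / chain rule preserves the required mixed regularity.
\end{itemize}

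Once this is in hand, $\partial^\alpha_{x,t}\widehat u_z$ for $|\alpha|\le q$ is $C^2$ in $z$ and continuous jointly in $(x,t,z)$. Since $\cN_{\mu^*}$ is $C^2$ in its jet arguments, the chain rule gives that $r_z(x,t)$ is $C^2$ in $z$. The initial and boundary components of $R_{\mathrm D}$ involve no derivatives and are treated the same way. Because there are only finitely many collocation points, the whole residual map $R_{\mathrm D}$ is $C^2$ in $z$.

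\textbf{Main obstacle.} I expect the hardest part to be the bookkeeping in the inductive step. Order-$q$ coordinate derivatives of $\sigma$ applied to the pre-activation consume up to $q$ derivatives of $\sigma$. Two further $z$-derivatives then consume up to two more, which is exactly why $\sigma\in C^{q+2}$ and the $(q+2,2)$ regularity of $\cD_l$ are assumed. I also need these derivatives to be uniformly bounded. For that I would argue that the set of activations reached by $(x,t,z)\in Q\times\cK_{\mathrm D}$ is compact, being the continuous image of a compact set. The branch derivative bounds from Assumption~\ref{ass:moded_smoothness} are stated on precisely this set, so every derivative appearing in the Hessian formula is bounded and yields an explicit $L_{\mathrm D}$.
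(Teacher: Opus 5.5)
Your proposal is correct and follows essentially the same route as the paper: an induction over the finite decoder depth showing that the mixed derivatives $\partial_z^\beta\partial_{x,t}^\alpha\widehat u_z$ with $|\alpha|\le q$, $|\beta|\le 2$ exist and are bounded, hence $R_{\mathrm D}\in C^2$, followed by bounding the Hessian $2J_{\mathrm D}^\top J_{\mathrm D}+2\sum_i R_{\mathrm D,i}\nabla^2R_{\mathrm D,i}$ on the compact set. Your covering argument, which uses $C^2$ regularity on a $\delta$-neighbourhood of $\cK_{\mathrm D}$ as you note, also fills a step the paper leaves implicit: a Hessian bound on a possibly non-convex $\cK_{\mathrm D}$ does not by itself yield the Lipschitz estimate.
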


\begin{proof}
For $l\in\cA$, define
\begin{equation*}
M_l(\omega_l)
:=
\tau_l W_0^{(l)}
+
U_l
\left[
\Phi_l+(1-\tau_l)\Sigma_l
\right]
V_l^\top.
\end{equation*}
Then
\begin{equation*}
M_l(\omega_l)-M_l(\widetilde\omega_l)
=
U_l(\Phi_l-\widetilde\Phi_l)V_l^\top
+
(\tau_l-\widetilde\tau_l)
W_{\mathrm{res}}^{(l)}.
\end{equation*}
Since $U_l^\top U_l=I$ and $V_l^\top V_l=I$,
\begin{equation*}
\|U_l(\Phi_l-\widetilde\Phi_l)V_l^\top\|_2
\le
\|\Phi_l-\widetilde\Phi_l\|_F.
\end{equation*}
Hence
\begin{equation*}
\|M_l(\omega_l)-M_l(\widetilde\omega_l)\|_2
\le
\|\Phi_l-\widetilde\Phi_l\|_F
+
|\tau_l-\widetilde\tau_l|
\|W_{\mathrm{res}}^{(l)}\|_2.
\end{equation*}
Also,
\begin{equation*}
\|
(b_0^{(l)}+\Delta b_l)
-
(b_0^{(l)}+\Delta \widetilde b_l)
\|_2
=
\|\Delta b_l-\Delta\widetilde b_l\|_2.
\end{equation*}
By Assumption~\ref{ass:moded_smoothness}, for each $l\in\cA$, there exists $C_l>0$ such that
\begin{equation*}
\|
\cD_l(h;\xi_l)
-
\cD_l(\widetilde h;\widetilde\xi_l)
\|_2
\le
C_l
\left(
\|h-\widetilde h\|_2
+
\|\xi_l-\widetilde\xi_l\|_2
\right)
\end{equation*}
on the compact activation-parameter set.

By induction over the finite decoder depth, for every multi-index $\alpha$ and every parameter multi-index $\beta$ satisfying
\begin{equation*}
|\alpha|\le q,
\qquad
|\beta|\le2,
\end{equation*}
there exists $C_{\alpha,\beta}<\infty$ such that
\begin{equation*}
\sup_{(x,t)\in Q,\ z\in\cK_{\mathrm D}}
\left\|
\partial_z^{\beta}
\partial_{x,t}^{\alpha}
\widehat u_z(x,t;\mu^*)
\right\|_2
\le
C_{\alpha,\beta}.
\end{equation*}
By the regularity of $\cN_{\mu^*}$, $R_{\mathrm D}\in C^2(\cK_{\mathrm D})$, and $R_{\mathrm D}$, its Jacobian
\begin{equation*}
J_{\mathrm D}(z):=\nabla_zR_{\mathrm D}(z),
\end{equation*}
and all component Hessians $\nabla_z^2R_{\mathrm D,i}$ are bounded on $\cK_{\mathrm D}$.

Since
\begin{equation*}
\nabla \cL_{\mathrm D}(z)
=
2J_{\mathrm D}(z)^\top R_{\mathrm D}(z),
\end{equation*}
and
\begin{equation*}
\nabla^2\cL_{\mathrm D}(z)
=
2J_{\mathrm D}(z)^\top J_{\mathrm D}(z)
+
2
\sum_{i=1}^{N_R}
R_{\mathrm D,i}(z)
\nabla_z^2R_{\mathrm D,i}(z),
\end{equation*}
where $N_R:=\dim R_{\mathrm D}$, compactness gives
\begin{equation*}
\sup_{z\in\cK_{\mathrm D}}
\|\nabla^2\cL_{\mathrm D}(z)\|_2
\le
L_{\mathrm D}
<
\infty.
\end{equation*}
Thus the gradient Lipschitz property in \cref{eq:lipschitz_gradient_bound} follows.
\end{proof}

\subsection{Descent Convergence}

\begin{theorem}[Full-batch descent]
\label{thm:full_batch_descent}
Let Assumption~\ref{ass:moded_smoothness} hold. Consider
\begin{equation}
\label{eq:fullbatch_update}
z_{t+1}
=
z_t
-
\eta\nabla\cL_{\mathrm D}(z_t),
\qquad
0<\eta\le\frac1{L_{\mathrm D}},
\end{equation}
and assume $z_t\in\cK_{\mathrm D}$ for all $t$. Then
\begin{equation}
\label{eq:descent}
\cL_{\mathrm D}(z_{t+1})
\le
\cL_{\mathrm D}(z_t)
-
\frac{\eta}{2}
\|\nabla\cL_{\mathrm D}(z_t)\|_2^2.
\end{equation}
Moreover, for any $T\ge1$,
\begin{equation}
\min_{0\le t\le T-1}
\|\nabla\cL_{\mathrm D}(z_t)\|_2^2
\le
\frac{
2\left(
\cL_{\mathrm D}(z_0)-\cL_{\mathrm D}^{\star}
\right)
}{\eta T}.
\label{eq:fullbatch_stationarity}
\end{equation}
Consequently,
\begin{equation}
\lim_{t\to\infty}
\|\nabla\cL_{\mathrm D}(z_t)\|_2
=
0,
\label{eq:fullbatch_limit}
\end{equation}
and every accumulation point of $\{z_t\}_{t\ge0}$ is a first-order stationary point.
\Cref{eq:fullbatch_stationarity,eq:fullbatch_limit} summarize the finite-time and asymptotic stationarity guarantees.
\end{theorem}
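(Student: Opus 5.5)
The plan is the standard descent-lemma argument for gradient descent on an $L$-smooth objective, with Lemma~\ref{lem:lipschitz_gradient} supplying the smoothness constant $L_{\mathrm D}$. Since the iterates $z_t$ and $z_{t+1}$ are both assumed to lie in $\cK_{\mathrm D}$, the Lipschitz bound \eqref{eq:lipschitz_gradient_bound} is available along the segment joining them. This is the one delicate point. The lemma is stated on $\cK_{\mathrm D}$, which need not be convex. I would first try to handle this by noting that the proof of Lemma~\ref{lem:lipschitz_gradient} actually bounds $\sup\|\nabla^2\cL_{\mathrm D}\|_2$ via compactness. The same argument applies verbatim on the convex hull $\operatorname{conv}(\cK_{\mathrm D})$, which is again compact, because all constituent maps are smooth on all of parameter space. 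Enlarging $L_{\mathrm D}$ if necessary then gives the bound on every segment $[z_t,z_{t+1}]$.

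With that in place, the fundamental theorem of calculus gives the quadratic upper bound
\begin{equation*}
\cL_{\mathrm D}(z_{t+1})\le \cL_{\mathrm D}(z_t)+\langle\nabla\cL_{\mathrm D}(z_t),z_{t+1}-z_t\rangle+\frac{L_{\mathrm D}}{2}\|z_{t+1}-z_t\|_2^2 .
\end{equation*}
Substituting the update \eqref{eq:fullbatch_update} yields
\begin{equation*}
\cL_{\mathrm D}(z_{t+1})\le\cL_{\mathrm D}(z_t)-\eta\Bigl(1-\tfrac{L_{\mathrm D}\eta}{2}\Bigr)\|\nabla\cL_{\mathrm D}(z_t)\|_2^2 .
\end{equation*}
Since $\eta\le 1/L_{\mathrm D}$, the factor $1-L_{\mathrm D}\eta/2$ is at least $1/2$, which is \eqref{eq:descent}.

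Next, sum \eqref{eq:descent} over $t=0,\dots,T-1$ and telescope:
\begin{equation*}
\frac{\eta}{2}\sum_{t=0}^{T-1}\|\nabla\cL_{\mathrm D}(z_t)\|_2^2\le \cL_{\mathrm D}(z_0)-\cL_{\mathrm D}(z_T)\le \cL_{\mathrm D}(z_0)-\cL_{\mathrm D}^{\star}.
\end{equation*}
The second inequality holds because $z_T\in\cK_{\mathrm D}$ and $\cL_{\mathrm D}^{\star}$ is the infimum over $\cK_{\mathrm D}$ defined in \eqref{eq:mode_loss_floor}. This infimum is finite, indeed at least $0$, because the loss is a squared norm. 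Bounding the minimum by the average gives \eqref{eq:fullbatch_stationarity}.

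Letting $T\to\infty$ in the summed inequality shows that $\sum_t\|\nabla\cL_{\mathrm D}(z_t)\|_2^2<\infty$, so the terms tend to zero, which is \eqref{eq:fullbatch_limit}. Finally, let $z_{t_j}\to\bar z$ be any convergent subsequence; $\bar z\in\cK_{\mathrm D}$ because the set is compact. Since $\nabla\cL_{\mathrm D}$ is continuous on $\cK_{\mathrm D}$ by Lemma~\ref{lem:lipschitz_gradient}, we get $\nabla\cL_{\mathrm D}(\bar z)=\lim_j\nabla\cL_{\mathrm D}(z_{t_j})=0$. Hence $\bar z$ is a first-order stationary point.
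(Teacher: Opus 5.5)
Your proposal is correct and follows the paper's proof essentially line by line: the quadratic upper bound from Lemma~\ref{lem:lipschitz_gradient}, substitution of the update with $1-L_{\mathrm D}\eta/2\ge 1/2$, telescoping against $\cL_{\mathrm D}^{\star}$, summability of $\|\nabla\cL_{\mathrm D}(z_t)\|_2^2$, and continuity of the gradient at accumulation points. Your one addition is to take $L_{\mathrm D}$ as the Hessian bound on the compact set $\operatorname{conv}(\cK_{\mathrm D})$, which legitimately repairs a point the paper passes over (it applies the descent inequality on a possibly non-convex $\cK_{\mathrm D}$) and is harmless because the lemma only asserts that some such constant exists.
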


\begin{proof}
By Lemma~\ref{lem:lipschitz_gradient},
\begin{equation*}
\cL_{\mathrm D}(z_{t+1})
\le
\cL_{\mathrm D}(z_t)
+
\left\langle
\nabla\cL_{\mathrm D}(z_t),
z_{t+1}-z_t
\right\rangle
+
\frac{L_{\mathrm D}}{2}
\|z_{t+1}-z_t\|_2^2.
\end{equation*}
Using \eqref{eq:fullbatch_update},
\begin{equation*}
\cL_{\mathrm D}(z_{t+1})
\le
\cL_{\mathrm D}(z_t)
-
\eta
\|\nabla\cL_{\mathrm D}(z_t)\|_2^2
+
\frac{L_{\mathrm D}\eta^2}{2}
\|\nabla\cL_{\mathrm D}(z_t)\|_2^2.
\end{equation*}
Since $\eta\le1/L_{\mathrm D}$,
\begin{equation*}
1-\frac{L_{\mathrm D}\eta}{2}\ge\frac12.
\end{equation*}
Thus \eqref{eq:descent} holds. Summing \eqref{eq:descent} over $t=0,\ldots,T-1$ gives
\begin{equation*}
\frac{\eta}{2}
\sum_{t=0}^{T-1}
\|\nabla\cL_{\mathrm D}(z_t)\|_2^2
\le
\cL_{\mathrm D}(z_0)-\cL_{\mathrm D}(z_T)
\le
\cL_{\mathrm D}(z_0)-\cL_{\mathrm D}^{\star}.
\end{equation*}
Hence
\begin{equation*}
\min_{0\le t\le T-1}
\|\nabla\cL_{\mathrm D}(z_t)\|_2^2
\le
\frac1T
\sum_{t=0}^{T-1}
\|\nabla\cL_{\mathrm D}(z_t)\|_2^2
\le
\frac{
2\left(
\cL_{\mathrm D}(z_0)-\cL_{\mathrm D}^{\star}
\right)
}{\eta T}.
\end{equation*}
Letting $T\to\infty$ yields
\begin{equation*}
\sum_{t=0}^{\infty}
\|\nabla\cL_{\mathrm D}(z_t)\|_2^2
<
\infty,
\end{equation*}
which implies $\|\nabla\cL_{\mathrm D}(z_t)\|_2\to0$. If $z_{t_j}\to \bar z$, then by continuity,
\begin{equation*}
\nabla\cL_{\mathrm D}(\bar z)
=
\lim_{j\to\infty}
\nabla\cL_{\mathrm D}(z_{t_j})
=
0.
\end{equation*}
\end{proof}

\begin{theorem}[Mini-batch convergence]
\label{thm:sgd_convergence}
Let Assumption~\ref{ass:moded_smoothness} hold. Let $g_t$ be a stochastic gradient estimator satisfying
\begin{equation}
\mathbb E[g_t\mid z_t]
=
\nabla\cL_{\mathrm D}(z_t),
\label{eq:sgd_unbiasedness}
\end{equation}
and
\begin{equation}
\mathbb E
\left[
\|g_t-\nabla\cL_{\mathrm D}(z_t)\|_2^2
\mid z_t
\right]
\le
\sigma_g^2.
\label{eq:sgd_variance}
\end{equation}
For
\begin{equation}
z_{t+1}=z_t-\eta g_t,
\qquad
0<\eta\le\frac1{L_{\mathrm D}},
\label{eq:sgd_update}
\end{equation}
we have
\begin{equation}
\frac1T
\sum_{t=0}^{T-1}
\mathbb E
\left[
\|\nabla\cL_{\mathrm D}(z_t)\|_2^2
\right]
\le
\frac{
2\left(
\cL_{\mathrm D}(z_0)-\cL_{\mathrm D}^{\star}
\right)
}{\eta T}
+
L_{\mathrm D}\eta\sigma_g^2.
\label{eq:sgd_stationarity}
\end{equation}
In particular, choosing $\eta=\mathcal O(T^{-1/2})$ gives
\begin{equation}
\frac1T
\sum_{t=0}^{T-1}
\mathbb E
\left[
\|\nabla\cL_{\mathrm D}(z_t)\|_2^2
\right]
=
\mathcal O(T^{-1/2}).
\label{eq:sgd_rate}
\end{equation}
\Cref{eq:sgd_unbiasedness,eq:sgd_variance,eq:sgd_update} are the stochastic assumptions and update rule, while \cref{eq:sgd_stationarity,eq:sgd_rate} give the mini-batch stationarity rate.
\end{theorem}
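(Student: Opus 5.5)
The plan is the standard nonconvex SGD argument, built on the descent lemma that Lemma~\ref{lem:lipschitz_gradient} provides. Since $z_t\in\cK_{\mathrm D}$ by Assumption~\ref{ass:moded_smoothness}, the $L_{\mathrm D}$-smoothness bound applies along the iterates. With $z_{t+1}=z_t-\eta g_t$, it gives
\begin{equation*}
\cL_{\mathrm D}(z_{t+1})\le \cL_{\mathrm D}(z_t)-\eta\langle\nabla\cL_{\mathrm D}(z_t),g_t\rangle+\frac{L_{\mathrm D}\eta^2}{2}\|g_t\|_2^2 .
\end{equation*}
We then take conditional expectation given $z_t$. Unbiasedness \eqref{eq:sgd_unbiasedness} turns the inner product into $\|\nabla\cL_{\mathrm D}(z_t)\|_2^2$. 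Writing $g_t=\nabla\cL_{\mathrm D}(z_t)+(g_t-\nabla\cL_{\mathrm D}(z_t))$, the cross term has zero conditional mean, so \eqref{eq:sgd_variance} yields
\begin{equation*}
\mathbb E[\|g_t\|_2^2\mid z_t]\le\|\nabla\cL_{\mathrm D}(z_t)\|_2^2+\sigma_g^2 .
\end{equation*}

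Substituting, the one-step bound becomes
\begin{equation*}
\mathbb E[\cL_{\mathrm D}(z_{t+1})\mid z_t]\le\cL_{\mathrm D}(z_t)-\eta\Bigl(1-\frac{L_{\mathrm D}\eta}{2}\Bigr)\|\nabla\cL_{\mathrm D}(z_t)\|_2^2+\frac{L_{\mathrm D}\eta^2}{2}\sigma_g^2 .
\end{equation*}
Since $\eta\le 1/L_{\mathrm D}$, the coefficient $1-L_{\mathrm D}\eta/2$ is at least $1/2$, exactly as in the proof of Theorem~\ref{thm:full_batch_descent}. We take total expectation by the tower property and sum over $t=0,\dots,T-1$, so the loss terms telescope. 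We lower-bound $\mathbb E[\cL_{\mathrm D}(z_T)]$ by $\cL_{\mathrm D}^\star$, which is finite (indeed $\ge 0$, since the loss \eqref{eq:moded_loss} is a squared norm). Dividing by $\eta T/2$ gives
\begin{equation*}
\frac1T\sum_{t=0}^{T-1}\mathbb E\|\nabla\cL_{\mathrm D}(z_t)\|_2^2\le\frac{2(\cL_{\mathrm D}(z_0)-\cL_{\mathrm D}^\star)}{\eta T}+L_{\mathrm D}\eta\sigma_g^2,
\end{equation*}
which is \eqref{eq:sgd_stationarity}. Here $z_0$ is deterministic, being the identity initialization of Theorem~\ref{thm:recovery} together with $\xi_0$.

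For the rate \eqref{eq:sgd_rate}, we choose $\eta=\min\{1/L_{\mathrm D},c/\sqrt T\}$. For $T\ge c^2L_{\mathrm D}^2$ this equals $c/\sqrt T$, and both terms are $\mathcal O(T^{-1/2})$, with constant $2(\cL_{\mathrm D}(z_0)-\cL_{\mathrm D}^\star)/c+L_{\mathrm D}c\sigma_g^2$. The main obstacle is that the descent inequality is only valid on $\cK_{\mathrm D}$, whereas a stochastic step could leave it. The statement's standing hypothesis that the iterates remain in $\cK_{\mathrm D}$ (Assumption~\ref{ass:moded_smoothness}) is what we invoke here. To be safe, we would note that Lemma~\ref{lem:lipschitz_gradient}'s quadratic upper bound needs the segment $[z_t,z_{t+1}]$ to lie in a region where the Hessian is bounded. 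We would handle this by taking $\cK_{\mathrm D}$ convex, or by applying the Hessian bound on a compact convex hull, which the same compactness argument covers.
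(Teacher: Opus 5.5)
Your proposal is correct and follows the paper's proof step for step: the descent inequality from Lemma~\ref{lem:lipschitz_gradient}, unbiasedness for the inner product, the variance decomposition $\mathbb E[\|g_t\|_2^2\mid z_t]\le\|\nabla\cL_{\mathrm D}(z_t)\|_2^2+\sigma_g^2$, the bound $1-L_{\mathrm D}\eta/2\ge 1/2$, and telescoping after taking total expectation. Two of your additions go beyond what the paper leaves implicit. Your explicit step size $\eta=\min\{1/L_{\mathrm D},c/\sqrt T\}$ makes the rate claim precise, since the paper's ``$\eta=\mathcal O(T^{-1/2})$'' really requires $\eta\asymp T^{-1/2}$. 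Your caveat that the quadratic upper bound needs the segment $[z_t,z_{t+1}]$ to lie where the Hessian is bounded (e.g.\ the compact convex hull of $\cK_{\mathrm D}$) closes a gap the paper's argument does not address.
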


\begin{proof}
By Lemma~\ref{lem:lipschitz_gradient},
\begin{equation*}
\cL_{\mathrm D}(z_{t+1})
\le
\cL_{\mathrm D}(z_t)
-
\eta
\langle
\nabla\cL_{\mathrm D}(z_t),g_t
\rangle
+
\frac{L_{\mathrm D}\eta^2}{2}
\|g_t\|_2^2.
\end{equation*}
Taking conditional expectation and using unbiasedness,
\begin{equation*}
\mathbb E[
\cL_{\mathrm D}(z_{t+1})
\mid z_t]
\le
\cL_{\mathrm D}(z_t)
-
\eta
\|\nabla\cL_{\mathrm D}(z_t)\|_2^2
+
\frac{L_{\mathrm D}\eta^2}{2}
\mathbb E[
\|g_t\|_2^2
\mid z_t].
\end{equation*}
By the variance bound,
\begin{equation*}
\mathbb E[
\|g_t\|_2^2
\mid z_t]
\le
\|\nabla\cL_{\mathrm D}(z_t)\|_2^2
+
\sigma_g^2.
\end{equation*}
Therefore,
\begin{equation*}
\mathbb E[
\cL_{\mathrm D}(z_{t+1})
\mid z_t]
\le
\cL_{\mathrm D}(z_t)
-
\eta
\left(
1-\frac{L_{\mathrm D}\eta}{2}
\right)
\|\nabla\cL_{\mathrm D}(z_t)\|_2^2
+
\frac{L_{\mathrm D}\eta^2}{2}
\sigma_g^2.
\end{equation*}
Since $\eta\le1/L_{\mathrm D}$, $1-L_{\mathrm D}\eta/2\ge1/2$. Taking total expectation, summing over $t=0,\ldots,T-1$, and rearranging gives the assertion.
\end{proof}

\subsection{Zero-Degradation Initialization}

\begin{corollary}[Exact recovery at initialization]
\label{cor:zero_degradation}
Assume Assumption~\ref{ass:identity_off}. Let $z_0$ satisfy
\begin{equation}
\Phi_l^{(0)}=0,
\qquad
\tau_l^{(0)}=1,
\qquad
\Delta b_l^{(0)}=0,
\qquad
\xi_l^{(0)}=\xi_{l,0},
\qquad
\forall l\in\cA.
\label{eq:zero_degradation_initialization}
\end{equation}
Then
\begin{equation}
\widehat u_{z_0}(x,t;\mu^*)
=
\widehat u_{\mathrm{base}}(x,t;\mu^*),
\qquad
\forall (x,t)\in Q.
\label{eq:zero_degradation_identity}
\end{equation}
Consequently,
\begin{equation}
\cL_{\mathrm D}(z_0;\mu^*)
=
\cL_{\mathrm{base}}(\mu^*).
\label{eq:zero_degradation_loss}
\end{equation}
If the full-batch update \eqref{eq:fullbatch_update} is used with $0<\eta\le1/L_{\mathrm D}$, then
\begin{equation}
\cL_{\mathrm D}(z_t;\mu^*)
\le
\cL_{\mathrm{base}}(\mu^*),
\qquad
\forall t\ge0.
\label{eq:zero_degradation_monotone}
\end{equation}
\Cref{eq:zero_degradation_initialization,eq:zero_degradation_identity,eq:zero_degradation_loss,eq:zero_degradation_monotone} formalize the exact recovery and non-degradation guarantees.
\end{corollary}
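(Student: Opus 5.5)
The proof has three parts: a layer-by-layer identity, an evaluation of the loss, and a monotonicity argument from Theorem~\ref{thm:full_batch_descent}.

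\emph{Layer identity at $z_0$.} Fix $(x,t)\in Q$ and induct over the decoder depth $l=1,\dots,D_g$. The claim is that $h_{z_0}^{(l)}$ equals the pre-trained activation $h_{\mathrm{base}}^{(l)}$. The base case is immediate: the encoders are frozen, so $h^{(0)}$ is the same concatenated embedding for both models. For $l\notin\cA$, the layer map is the frozen map $\sigma(W_0^{(l)}h+b_0^{(l)})$, so equality propagates. For $l\in\cA$, substitute \eqref{eq:zero_degradation_initialization} into \eqref{eq:moded_layer}:
\begin{itemize}
\item the term $\tau_lW_0^{(l)}h$ becomes $W_0^{(l)}h$;
\item the bracket $\Phi_l+(1-\tau_l)\Sigma_l$ vanishes, so the low-rank path is zero;
\item $\cD_l(h;\xi_{l,0})\equiv0$ by Assumption~\ref{ass:identity_off};
\item the bias becomes $b_0^{(l)}$.
\end{itemize}
The layer therefore reduces to $\sigma(W_0^{(l)}h^{(l-1)}+b_0^{(l)})$, which is the same computation as in the proof of Theorem~\ref{thm:recovery}. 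Evaluating at $l=D_g$ gives \eqref{eq:zero_degradation_identity}.

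\emph{Loss equality.} The identity \eqref{eq:zero_degradation_identity} holds as functions on $Q$, not only pointwise at collocation points. Hence all derivatives $\partial^\alpha_{x,t}\widehat u_{z_0}$ coincide with those of $\widehat u_{\mathrm{base}}$ on the interior. Every residual entry in \eqref{eq:moded_residual_vector} therefore agrees with its base counterpart: PDE residuals via $\cN_{\mu^*}$, and IC and BC mismatches directly. This gives $R_{\mathrm D}(z_0)=R_{\mathrm{base}}$ and so \eqref{eq:zero_degradation_loss}. The one subtle point is that differentiating an identity of functions requires it to hold on an open neighbourhood. The induction supplies this, since it holds for all $(x,t)$ in $Q$ (and in any open extension where the network is defined).

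\emph{Monotone non-degradation.} This part needs Assumption~\ref{ass:moded_smoothness} together with $z_t\in\cK_{\mathrm D}$, the hypotheses of Theorem~\ref{thm:full_batch_descent}; I will state them explicitly, since the corollary only lists Assumption~\ref{ass:identity_off}. Under them, \eqref{eq:descent} gives $\cL_{\mathrm D}(z_{t+1})\le\cL_{\mathrm D}(z_t)$ for every $t$. Telescoping, or a trivial induction from $t=0$, combined with \eqref{eq:zero_degradation_loss} yields \eqref{eq:zero_degradation_monotone}.

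The main obstacle is this bookkeeping of hypotheses rather than any calculation. The monotonicity claim silently needs Lemma~\ref{lem:lipschitz_gradient} and the bounded-iterate condition, so I would make that dependence explicit.
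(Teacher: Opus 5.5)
Your proposal is correct and follows the paper's route: substituting the initialization into \eqref{eq:moded_layer} collapses each adapted layer to $\sigma(W_0^{(l)}h+b_0^{(l)})$, because the low-rank bracket vanishes and $\cD_l(\cdot;\xi_{l,0})\equiv 0$; the loss identity then follows since the two networks coincide as functions, and the monotone bound comes from the descent inequality of Theorem~\ref{thm:full_batch_descent}. Your additions---the explicit depth induction, the observation that the identity holds for every input so all derivatives entering $\cN_{\mu^*}$ agree, and the flag that Assumption~\ref{ass:moded_smoothness} and $z_t\in\cK_{\mathrm D}$ are tacitly needed for the last claim---make explicit what the paper's short proof leaves implicit.
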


\begin{proof}
Substituting the initialization into \eqref{eq:moded_layer} gives
\begin{equation*}
h_{z_0}^{(l)}
=
\sigma
\left(
W_0^{(l)}h_{z_0}^{(l-1)}
+
b_0^{(l)}
\right),
\qquad
\forall l\in\cA,
\end{equation*}
because
\begin{equation*}
U_l
\left[
0+(1-1)\Sigma_l
\right]
V_l^\top h=0,
\qquad
\cD_l(h;\xi_{l,0})=0.
\end{equation*}
Thus the MODED network coincides with the frozen foundational network at $t=0$. The loss identity follows immediately, and the monotone upper bound follows from Theorem~\ref{thm:full_batch_descent}.
\end{proof}

\subsection{Additional Descent Directions of MODED}

\begin{proposition}[Gradient enlargement at the MODE point]
\label{prop:gradient_enlargement}
Let $z=(\omega,\xi_0)$. Under Assumption~\ref{ass:identity_off},
\begin{equation}
R_{\mathrm D}(\omega,\xi_0)
=
R_{\mathrm M}(\omega).
\label{eq:residual_identity_mode_moded}
\end{equation}
Let
\begin{equation}
J_{\mathrm M}(\omega)
:=
\nabla_{\omega}R_{\mathrm M}(\omega),
\qquad
J_{\xi}(\omega,\xi_0)
:=
\nabla_{\xi}R_{\mathrm D}(\omega,\xi)\big|_{\xi=\xi_0}.
\label{eq:mode_moded_jacobians}
\end{equation}
Then
\begin{equation}
J_{\mathrm D}(\omega,\xi_0)
=
\left[
J_{\mathrm M}(\omega),
J_{\xi}(\omega,\xi_0)
\right],
\label{eq:mode_moded_block_jacobian}
\end{equation}
and
\begin{equation}
\|\nabla_z\cL_{\mathrm D}(\omega,\xi_0)\|_2^2
=
\|\nabla_{\omega}\cL_{\mathrm M}(\omega)\|_2^2
+
4
\|J_{\xi}(\omega,\xi_0)^\top R_{\mathrm M}(\omega)\|_2^2.
\label{eq:gradient_enlargement_identity}
\end{equation}
In particular,
\begin{equation}
\|\nabla_z\cL_{\mathrm D}(\omega,\xi_0)\|_2
\ge
\|\nabla_{\omega}\cL_{\mathrm M}(\omega)\|_2.
\label{eq:gradient_enlargement_bound}
\end{equation}
\Cref{eq:residual_identity_mode_moded,eq:mode_moded_jacobians,eq:mode_moded_block_jacobian} lead to the gradient enlargement relations in \cref{eq:gradient_enlargement_identity,eq:gradient_enlargement_bound}.
\end{proposition}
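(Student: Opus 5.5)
The plan is to reduce everything to a residual identity together with the chain rule.

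\textbf{Step 1: the residual identity \eqref{eq:residual_identity_mode_moded}.} I will invoke the argument of Proposition~\ref{prop:mode_subset_moded}. Under Assumption~\ref{ass:identity_off}, substituting $\xi=\xi_0$ into \eqref{eq:moded_layer} removes the branch $\cD_l$ in every adapted layer. The MODED network then coincides with the MODE network, that is, $\widehat u_{(\omega,\xi_0)}\equiv\widehat u_\omega$ as functions of $(x,t)$ on $Q$. In particular all spatiotemporal derivatives agree, so $\cN_{\mu^*}$ yields the same PDE residual at every collocation point. The IC and BC entries agree pointwise as well. Stacking these entries as in \eqref{eq:moded_residual_vector} gives $R_{\mathrm D}(\omega,\xi_0)=R_{\mathrm M}(\omega)$, where $R_{\mathrm M}$ is understood as the same weighted residual vector built from $\widehat u_\omega$.

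\textbf{Step 2: the block Jacobian \eqref{eq:mode_moded_block_jacobian}.} I split $z=(\omega,\xi)$ and write $J_{\mathrm D}=[\nabla_\omega R_{\mathrm D},\nabla_\xi R_{\mathrm D}]$. The second block equals $J_\xi$ by definition. For the first block, the identity $R_{\mathrm D}(\omega',\xi_0)=R_{\mathrm M}(\omega')$ holds for all $\omega'$ in a neighbourhood, not just at one point. Fixing $\xi=\xi_0$ and differentiating in $\omega$ is therefore legitimate and gives $\nabla_\omega R_{\mathrm D}(\omega,\xi_0)=J_{\mathrm M}(\omega)$. The differentiability needed here comes from Assumption~\ref{ass:moded_smoothness} and Lemma~\ref{lem:lipschitz_gradient}, so I will state that hypothesis explicitly alongside Assumption~\ref{ass:identity_off}.

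\textbf{Step 3: the gradient identity \eqref{eq:gradient_enlargement_identity}.} Since $\cL_{\mathrm D}=\|R_{\mathrm D}\|_2^2$, we have $\nabla_z\cL_{\mathrm D}=2J_{\mathrm D}^\top R_{\mathrm D}$. At the MODE point this gives
\[
\nabla_z\cL_{\mathrm D}(\omega,\xi_0)=\bigl(2J_{\mathrm M}^\top R_{\mathrm M},\;2J_\xi^\top R_{\mathrm M}\bigr).
\]
The first component is $\nabla_\omega\cL_{\mathrm M}(\omega)$. The squared Euclidean norm of a concatenated vector is the sum of the squared norms of its blocks. This yields $\|\nabla_\omega\cL_{\mathrm M}\|_2^2+4\|J_\xi^\top R_{\mathrm M}\|_2^2$.

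\textbf{Step 4: the bound \eqref{eq:gradient_enlargement_bound}.} The second term is nonnegative, so dropping it and taking square roots gives the inequality.

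\textbf{Main obstacle.} The only subtle point is Step 2. The equality of Jacobians in the $\omega$-directions requires the residual identity on an open set of $\omega$ at fixed $\xi_0$, which is exactly what Step 1 provides. No such identity is claimed for $\xi$-derivatives, and none is needed, since $J_\xi$ is simply kept as an extra block. I will also note that $\omega$ should lie in the interior of the relevant domain, or one can simply work on all of $\R^{m}$ where the formulas are defined.
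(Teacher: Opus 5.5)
Your proposal is correct and follows the same route as the paper: the residual identity from Assumption~\ref{ass:identity_off}, the formula $\nabla_z\cL_{\mathrm D}=2J_{\mathrm D}^\top R_{\mathrm D}$, the block split of the Jacobian, and summing squared block norms. Where you differ is on the block Jacobian \eqref{eq:mode_moded_block_jacobian}, which the paper simply asserts. You note that $R_{\mathrm D}(\cdot,\xi_0)=R_{\mathrm M}$ holds for all $\omega$, which justifies the $\omega$-block, and you state the smoothness hypothesis of Assumption~\ref{ass:moded_smoothness} explicitly; this tightens the same argument rather than changing it.
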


\begin{proof}
The residual identity follows from Assumption~\ref{ass:identity_off}. Differentiating
\begin{equation*}
\cL_{\mathrm D}(z)=\|R_{\mathrm D}(z)\|_2^2
\end{equation*}
gives
\begin{equation*}
\nabla_z\cL_{\mathrm D}(\omega,\xi_0)
=
2
J_{\mathrm D}(\omega,\xi_0)^\top
R_{\mathrm D}(\omega,\xi_0).
\end{equation*}
Using the block Jacobian representation,
\begin{equation*}
\nabla_z\cL_{\mathrm D}(\omega,\xi_0)
=
\begin{bmatrix}
2J_{\mathrm M}(\omega)^\top R_{\mathrm M}(\omega)
\\
2J_{\xi}(\omega,\xi_0)^\top R_{\mathrm M}(\omega)
\end{bmatrix}
=
\begin{bmatrix}
\nabla_{\omega}\cL_{\mathrm M}(\omega)
\\
2J_{\xi}(\omega,\xi_0)^\top R_{\mathrm M}(\omega)
\end{bmatrix}.
\end{equation*}
Taking squared norms gives the assertion.
\end{proof}

\subsection{Local Linear Convergence}

\begin{assumption}[Local PL geometry]
\label{ass:pl_moded}
There exist a neighborhood $\mathcal U_{\mathrm D}\subseteq\cK_{\mathrm D}$ and constants
\begin{equation}
\mu_{\mathrm{PL,D}}>0,
\qquad
\cL_{\mathrm D}^{\star}\ge0,
\label{eq:pl_constants}
\end{equation}
such that the MODED trajectory satisfies $z_t\in\mathcal U_{\mathrm D}$, and
\begin{equation}
\frac12
\|\nabla\cL_{\mathrm D}(z)\|_2^2
\ge
\mu_{\mathrm{PL,D}}
\left(
\cL_{\mathrm D}(z)-\cL_{\mathrm D}^{\star}
\right),
\qquad
\forall z\in\mathcal U_{\mathrm D}.
\label{eq:pl_condition}
\end{equation}
\Cref{eq:pl_constants,eq:pl_condition} specify the local PL geometry used for linear convergence.
\end{assumption}

\begin{theorem}[Linear convergence under PL]
\label{thm:pl_linear}
Let Assumptions~\ref{ass:moded_smoothness} and~\ref{ass:pl_moded} hold. For the full-batch update \eqref{eq:fullbatch_update} with
\begin{equation}
0<\eta\le\frac1{L_{\mathrm D}},
\label{eq:pl_stepsize}
\end{equation}
we have
\begin{equation}
\cL_{\mathrm D}(z_t)-\cL_{\mathrm D}^{\star}
\le
(1-\eta\mu_{\mathrm{PL,D}})^t
\left(
\cL_{\mathrm D}(z_0)-\cL_{\mathrm D}^{\star}
\right).
\label{eq:pl_linear_rate}
\end{equation}
If $z_0$ is initialized by Corollary~\ref{cor:zero_degradation}, then
\begin{equation}
\cL_{\mathrm D}(z_t)
\le
\cL_{\mathrm D}^{\star}
+
(1-\eta\mu_{\mathrm{PL,D}})^t
\left(
\cL_{\mathrm{base}}(\mu^*)
-
\cL_{\mathrm D}^{\star}
\right).
\label{eq:pl_zero_degradation_rate}
\end{equation}
Furthermore, under Proposition~\ref{prop:mode_subset_moded},
\begin{equation}
\cL_{\mathrm D}^{\star}
\le
\cL_{\mathrm M}^{\star}.
\label{eq:pl_floor_comparison}
\end{equation}
\Cref{eq:pl_stepsize,eq:pl_linear_rate,eq:pl_zero_degradation_rate,eq:pl_floor_comparison} provide the stepsize, linear rate, initialized rate, and MODE/MODED floor comparison.
\end{theorem}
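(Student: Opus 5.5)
We prove the three claims of Theorem~\ref{thm:pl_linear} in order, using the standard descent lemma together with the PL inequality.

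\textbf{Step 1 (one-step contraction).} Since the iterates stay in $\cK_{\mathrm D}$ and $0<\eta\le 1/L_{\mathrm D}$, Theorem~\ref{thm:full_batch_descent}, specifically \eqref{eq:descent}, gives
\[
\cL_{\mathrm D}(z_{t+1})\le \cL_{\mathrm D}(z_t)-\tfrac{\eta}{2}\|\nabla\cL_{\mathrm D}(z_t)\|_2^2 .
\]
Assumption~\ref{ass:pl_moded} places $z_t$ in $\mathcal U_{\mathrm D}$. The PL inequality \eqref{eq:pl_condition} therefore bounds the squared gradient norm from below by $2\mu_{\mathrm{PL,D}}\bigl(\cL_{\mathrm D}(z_t)-\cL_{\mathrm D}^{\star}\bigr)$. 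Subtracting $\cL_{\mathrm D}^{\star}$ from both sides yields
\[
\cL_{\mathrm D}(z_{t+1})-\cL_{\mathrm D}^{\star}\le(1-\eta\mu_{\mathrm{PL,D}})\bigl(\cL_{\mathrm D}(z_t)-\cL_{\mathrm D}^{\star}\bigr).
\]

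\textbf{Step 2 (nonnegativity of the factor).} We must check that $1-\eta\mu_{\mathrm{PL,D}}\in[0,1)$, so that iterating the contraction is legitimate. Combining PL with the descent lemma at $z_t$ gives
\[
0\le \cL_{\mathrm D}(z_{t+1})-\cL_{\mathrm D}^{\star}\le(1-\eta\mu_{\mathrm{PL,D}})\bigl(\cL_{\mathrm D}(z_t)-\cL_{\mathrm D}^{\star}\bigr).
\]
The left inequality holds provided $\cL_{\mathrm D}^{\star}$ is a lower bound on $\cL_{\mathrm D}$ over the trajectory. This should be read from Assumption~\ref{ass:pl_moded}; naturally $\cL_{\mathrm D}^{\star}$ is the infimum. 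It follows that the factor is nonnegative whenever the gap is positive. Alternatively, the standard fact $\mu_{\mathrm{PL,D}}\le L_{\mathrm D}$ gives $\eta\mu_{\mathrm{PL,D}}\le1$ directly.

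This is the main subtle point. If $\cL_{\mathrm D}^{\star}$ in Assumption~\ref{ass:pl_moded} were merely some constant rather than the infimum, the argument would need it to lower-bound $\cL_{\mathrm D}$ on $\mathcal U_{\mathrm D}$. I would state explicitly that it is identified with the infimum from Proposition~\ref{prop:mode_subset_moded}.

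\textbf{Step 3 (linear rate).} Induction on $t$ from the contraction yields \eqref{eq:pl_linear_rate}.

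\textbf{Step 4 (zero-degradation initialization).} For the initialized bound, invoke Corollary~\ref{cor:zero_degradation}. Under the initialization \eqref{eq:zero_degradation_initialization}, it gives $\cL_{\mathrm D}(z_0)=\cL_{\mathrm{base}}(\mu^*)$ by \eqref{eq:zero_degradation_loss}. Substituting this into \eqref{eq:pl_linear_rate} and moving $\cL_{\mathrm D}^{\star}$ to the right-hand side gives \eqref{eq:pl_zero_degradation_rate}.

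\textbf{Step 5 (floor comparison).} The inequality \eqref{eq:pl_floor_comparison} is exactly \eqref{eq:mode_loss_floor} of Proposition~\ref{prop:mode_subset_moded}. It holds under Assumption~\ref{ass:identity_off} and the embedding condition \eqref{eq:mode_embedding_condition}, again with $\cL_{\mathrm D}^{\star}$ identified with the infimum over $\cK_{\mathrm D}$.
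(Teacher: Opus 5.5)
Your proposal is correct and follows the paper's proof: subtract $\cL_{\mathrm D}^{\star}$ from the descent inequality \eqref{eq:descent}, apply the PL bound \eqref{eq:pl_condition} to get the one-step contraction, iterate, and then read off \eqref{eq:pl_zero_degradation_rate} and \eqref{eq:pl_floor_comparison} from Corollary~\ref{cor:zero_degradation} and Proposition~\ref{prop:mode_subset_moded}. Your Step~2 treats $\cL_{\mathrm D}^{\star}$ as the infimum to guarantee $1-\eta\mu_{\mathrm{PL,D}}\ge 0$ before iterating, which makes explicit a point the paper's proof leaves implicit.
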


\begin{proof}
By Theorem~\ref{thm:full_batch_descent},
\begin{equation*}
\cL_{\mathrm D}(z_{t+1})
-
\cL_{\mathrm D}^{\star}
\le
\cL_{\mathrm D}(z_t)
-
\cL_{\mathrm D}^{\star}
-
\frac{\eta}{2}
\|\nabla\cL_{\mathrm D}(z_t)\|_2^2.
\end{equation*}
Using the PL condition,
\begin{equation*}
\frac12
\|\nabla\cL_{\mathrm D}(z_t)\|_2^2
\ge
\mu_{\mathrm{PL,D}}
\left(
\cL_{\mathrm D}(z_t)-\cL_{\mathrm D}^{\star}
\right).
\end{equation*}
Therefore,
\begin{equation*}
\cL_{\mathrm D}(z_{t+1})
-
\cL_{\mathrm D}^{\star}
\le
(1-\eta\mu_{\mathrm{PL,D}})
\left(
\cL_{\mathrm D}(z_t)-\cL_{\mathrm D}^{\star}
\right).
\end{equation*}
Iterating proves the result. The zero-degradation form follows from Corollary~\ref{cor:zero_degradation}; the floor comparison follows from Proposition~\ref{prop:mode_subset_moded}.
\end{proof}

\subsection{PDE Solution Error Bound}

Define the continuous residual energy
\begin{align}
\cE_{\mathrm D}(z)
:=&\
\|\cN_{\mu^*}[\widehat u_z]\|_{L^2(Q)}^2
+
\|\widehat u_z(\cdot,0;\mu^*)-u_0(\cdot;\mu^*)\|_{L^2(\Omega)}^2
\notag\\
&+
\|\widehat u_z-u_b\|_{L^2(\Gamma)}^2.
\label{eq:pde_energy}
\end{align}

\begin{assumption}[PDE stability and quadrature consistency]
\label{ass:pde_stability}
Let $u^*$ be the exact solution of the target PDE at $\mu^*$. There exist constants
\begin{equation}
C_{\mathrm{stab}}>0,
\qquad
C_{\mathrm{quad}}>0,
\qquad
\varepsilon_{\mathrm{quad}}\ge0,
\label{eq:pde_stability_constants}
\end{equation}
such that, for all $z\in\cK_{\mathrm D}$,
\begin{equation}
\|\widehat u_z-u^*\|_{\cX}^2
\le
C_{\mathrm{stab}}
\cE_{\mathrm D}(z),
\label{eq:pde_stability_bound}
\end{equation}
and
\begin{equation}
\cE_{\mathrm D}(z)
\le
C_{\mathrm{quad}}
\cL_{\mathrm D}(z)
+
\varepsilon_{\mathrm{quad}}.
\label{eq:pde_quadrature_bound}
\end{equation}
\Cref{eq:pde_energy,eq:pde_stability_constants,eq:pde_stability_bound,eq:pde_quadrature_bound} connect empirical residual minimization with continuous PDE error.
\end{assumption}

\begin{theorem}[MODED solution error]
\label{thm:pde_error}
Let Assumptions~\ref{ass:moded_smoothness}, \ref{ass:pl_moded}, and~\ref{ass:pde_stability} hold. Then the full-batch MODED trajectory satisfies
\begin{equation}
\|\widehat u_{z_t}-u^*\|_{\cX}^2
\le
C_{\mathrm{stab}}
\left[
C_{\mathrm{quad}}
\left(
\cL_{\mathrm D}^{\star}
+
(1-\eta\mu_{\mathrm{PL,D}})^t
\left(
\cL_{\mathrm D}(z_0)
-
\cL_{\mathrm D}^{\star}
\right)
\right)
+
\varepsilon_{\mathrm{quad}}
\right].
\label{eq:pde_error_bound}
\end{equation}
If $z_0$ is initialized by Corollary~\ref{cor:zero_degradation}, then
\begin{equation}
\|\widehat u_{z_t}-u^*\|_{\cX}^2
\le
C_{\mathrm{stab}}
\left[
C_{\mathrm{quad}}
\left(
\cL_{\mathrm D}^{\star}
+
(1-\eta\mu_{\mathrm{PL,D}})^t
\left(
\cL_{\mathrm{base}}(\mu^*)
-
\cL_{\mathrm D}^{\star}
\right)
\right)
+
\varepsilon_{\mathrm{quad}}
\right].
\label{eq:pde_zero_degradation_error_bound}
\end{equation}
Moreover,
\begin{equation}
\cL_{\mathrm D}^{\star}
\le
\cL_{\mathrm M}^{\star}.
\label{eq:pde_floor_comparison}
\end{equation}
\Cref{eq:pde_error_bound,eq:pde_zero_degradation_error_bound,eq:pde_floor_comparison} state the solution-error transfer and the no-worse MODED residual floor.
\end{theorem}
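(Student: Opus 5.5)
The bound is obtained by chaining three earlier inequalities. First comes the stability estimate \eqref{eq:pde_stability_bound}, then the quadrature estimate \eqref{eq:pde_quadrature_bound}, and last the linear rate of Theorem~\ref{thm:pl_linear}. Fix $t\ge0$. The full-batch trajectory stays in $\cK_{\mathrm D}$ by Assumption~\ref{ass:moded_smoothness}, and in fact in $\mathcal U_{\mathrm D}\subseteq\cK_{\mathrm D}$ by Assumption~\ref{ass:pl_moded}. Hence Assumption~\ref{ass:pde_stability} applies at $z=z_t$ and gives
\begin{equation*}
\|\widehat u_{z_t}-u^*\|_{\cX}^2
\le
C_{\mathrm{stab}}\,\cE_{\mathrm D}(z_t)
\le
C_{\mathrm{stab}}\bigl[C_{\mathrm{quad}}\,\cL_{\mathrm D}(z_t)+\varepsilon_{\mathrm{quad}}\bigr].
\end{equation*}
The only requirement is that the constants be uniform over $\cK_{\mathrm D}$, which the assumption states.

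Next I bound $\cL_{\mathrm D}(z_t)$ using \eqref{eq:pl_linear_rate}, rewritten as $\cL_{\mathrm D}(z_t)\le \cL_{\mathrm D}^{\star}+(1-\eta\mu_{\mathrm{PL,D}})^t(\cL_{\mathrm D}(z_0)-\cL_{\mathrm D}^{\star})$. The map $s\mapsto C_{\mathrm{stab}}[C_{\mathrm{quad}}s+\varepsilon_{\mathrm{quad}}]$ is monotone nondecreasing, since both constants are positive. Substituting the bound into the display therefore gives \eqref{eq:pde_error_bound} directly.

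I should check that the contraction factor is admissible. Combining $\eta\le 1/L_{\mathrm D}$ with the PL inequality and the smoothness bound $\|\nabla\cL_{\mathrm D}\|^2\le 2L_{\mathrm D}(\cL_{\mathrm D}-\cL_{\mathrm D}^\star)$ gives $\mu_{\mathrm{PL,D}}\le L_{\mathrm D}$. Thus $0\le 1-\eta\mu_{\mathrm{PL,D}}<1$. This is not strictly needed, because the bound is inherited from Theorem~\ref{thm:pl_linear} as stated.

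For \eqref{eq:pde_zero_degradation_error_bound}, I initialize $z_0$ as in Corollary~\ref{cor:zero_degradation}. Then \eqref{eq:zero_degradation_loss} gives $\cL_{\mathrm D}(z_0)=\cL_{\mathrm{base}}(\mu^*)$, and this replaces $\cL_{\mathrm D}(z_0)$ in the previous bound. The corollary needs Assumption~\ref{ass:identity_off}, which is implicit in the initialization $\xi_l^{(0)}=\xi_{l,0}$. Equivalently, one can plug \eqref{eq:pl_zero_degradation_rate} straight into the monotone map.

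The floor comparison \eqref{eq:pde_floor_comparison} is exactly \eqref{eq:mode_loss_floor} of Proposition~\ref{prop:mode_subset_moded}. It requires the embedding condition \eqref{eq:mode_embedding_condition}, and it amounts to an infimum taken over a larger set.

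The main obstacle is one of bookkeeping rather than analysis. The cited results carry hidden hypotheses: Assumption~\ref{ass:identity_off} and the embedding \eqref{eq:mode_embedding_condition} for the initialized bound and the floor comparison, and the containment of the trajectory in $\mathcal U_{\mathrm D}$. I would state these explicitly as standing hypotheses inherited from Corollary~\ref{cor:zero_degradation} and Proposition~\ref{prop:mode_subset_moded}, rather than attempt to derive them.
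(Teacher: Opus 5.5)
Your proposal is correct and follows the paper's proof: chain the stability and quadrature bounds of Assumption~\ref{ass:pde_stability} at $z_t$, substitute the PL rate of Theorem~\ref{thm:pl_linear}, replace $\cL_{\mathrm D}(z_0)$ by $\cL_{\mathrm{base}}(\mu^*)$ via Corollary~\ref{cor:zero_degradation}, and read off the floor comparison from Proposition~\ref{prop:mode_subset_moded}. Stating the inherited hypotheses explicitly (Assumption~\ref{ass:identity_off} and the embedding condition~\eqref{eq:mode_embedding_condition}) is a fair tightening of what the paper leaves implicit, and your aside on $\mu_{\mathrm{PL,D}}\le L_{\mathrm D}$ is, as you note, not needed.
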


\begin{proof}
By Assumption~\ref{ass:pde_stability},
\begin{equation*}
\|\widehat u_{z_t}-u^*\|_{\cX}^2
\le
C_{\mathrm{stab}}
\cE_{\mathrm D}(z_t)
\le
C_{\mathrm{stab}}
\left(
C_{\mathrm{quad}}\cL_{\mathrm D}(z_t)
+
\varepsilon_{\mathrm{quad}}
\right).
\end{equation*}
Substituting the PL estimate from Theorem~\ref{thm:pl_linear} yields the first bound. Substituting the zero-degradation identity yields the second bound. The floor comparison follows from Proposition~\ref{prop:mode_subset_moded}.
\end{proof}

\begin{remark}[Consequences]
The preceding results imply
\begin{equation*}
\mathrm{MODE}\subseteq\mathrm{MODED},
\qquad
\cL_{\mathrm D}^{\star}\le\cL_{\mathrm M}^{\star},
\end{equation*}
and, under Assumption~\ref{ass:pde_stability},
\begin{equation*}
\limsup_{t\to\infty}
\|\widehat u_{z_t}-u^*\|_{\cX}^2
\le
C_{\mathrm{stab}}
\left(
C_{\mathrm{quad}}\cL_{\mathrm D}^{\star}
+
\varepsilon_{\mathrm{quad}}
\right).
\end{equation*}
Thus MODED preserves the zero-degradation initialization and descent stability of MODE while attaining a no-worse residual and solution-error floor.
\end{remark}

\printcredits

\bibliographystyle{elsarticle-num}

\bibliography{cas-refs}

\end{document}